\documentclass{article}

\usepackage{microtype}
\usepackage{graphicx}
\usepackage{subcaption}
\usepackage{booktabs} % for professional tables
\usepackage{mymacro}

\usepackage{hyperref}

\usepackage[preprint]{icml2026}

\usepackage{amsmath}
\usepackage{amssymb}
\usepackage{mathtools}
\usepackage{amsthm}

\usepackage[capitalize,noabbrev]{cleveref}

\theoremstyle{plain}
\newtheorem{theorem}{Theorem}[section]
\newtheorem{proposition}[theorem]{Proposition}
\newtheorem{lemma}[theorem]{Lemma}

\theoremstyle{definition}
\newtheorem{definition}[theorem]{Definition}
\newtheorem{assumption}[theorem]{Assumption}
\theoremstyle{remark}
\newtheorem{remark}[theorem]{Remark}

\usepackage[textsize=tiny]{todonotes}

\icmltitlerunning{Inference-Aware Meta-Alignment of LLMs via Non-Linear GRPO}

\begin{document}

\twocolumn[
  \icmltitle{Inference-Aware Meta-Alignment of LLMs via Non-Linear GRPO}

  % It is OKAY to include author information, even for blind submissions: the
  % style file will automatically remove it for you unless you've provided
  % the [accepted] option to the icml2026 package.

  % List of affiliations: The first argument should be a (short) identifier you
  % will use later to specify author affiliations Academic affiliations
  % should list Department, University, City, Region, Country Industry
  % affiliations should list Company, City, Region, Country

  % You can specify symbols, otherwise they are numbered in order. Ideally, you
  % should not use this facility. Affiliations will be numbered in order of
  % appearance and this is the preferred way.
  \icmlsetsymbol{equal}{*}

  \begin{icmlauthorlist}
    \icmlauthor{Shokichi Takakura}{ly}
    \icmlauthor{Akifumi Wachi}{ly}
    \icmlauthor{Rei Higuchi}{ut,aip}
    \icmlauthor{Kohei Miyaguchi}{ly}
    \icmlauthor{Taiji Suzuki}{ut,aip}
  \end{icmlauthorlist}

  \icmlaffiliation{ly}{LY Corporation}
  \icmlaffiliation{ut}{The University of Tokyo}
  \icmlaffiliation{aip}{RIKEN AIP}

  \icmlcorrespondingauthor{Shokichi Takakura}{stakakur@lycorp.co.jp}

  % You may provide any keywords that you find helpful for describing your
  % paper; these are used to populate the "keywords" metadata in the PDF but
  % will not be shown in the document
  \icmlkeywords{Machine Learning, ICML}

  \vskip 0.3in
]

% this must go after the closing bracket ] following \twocolumn[ ...

% This command actually creates the footnote in the first column listing the
% affiliations and the copyright notice. The command takes one argument, which
% is text to display at the start of the footnote. The \icmlEqualContribution
% command is standard text for equal contribution. Remove it (just {}) if you
% do not need this facility.

% Use ONE of the following lines. DO NOT remove the command.
% If you have no special notice, KEEP empty braces:
\printAffiliationsAndNotice{}  % no special notice (required even if empty)
% Or, if applicable, use the standard equal contribution text:
% \printAffiliationsAndNotice{\icmlEqualContribution}

\begin{abstract}
  Aligning large language models (LLMs) to diverse human preferences is fundamentally challenging since criteria can often conflict with each other.
  Inference-time alignment methods have recently gained popularity as they allow LLMs to be aligned to multiple criteria via different alignment algorithms at inference time.
  However, inference-time alignment is computationally expensive since it often requires multiple forward passes of the base model.
  In this work, we propose \textit{inference-aware meta-alignment} (IAMA), a novel
  approach that enables LLMs to be aligned to multiple criteria with limited computational budget at inference time.
  IAMA trains a base model such that it can be effectively aligned to multiple tasks via different inference-time alignment algorithms.
  To solve the non-linear optimization problems involved in IAMA, we propose \textit{non-linear GRPO}, which provably converges to the optimal solution in the space of probability measures.
\end{abstract}

\section{Introduction}
Aligning large language models (LLMs) via reinforcement learning from human feedback (RLHF)~\citep{christiano2017deep,ouyang2022training} or verifiable rewards (RLVR)~\citep{lambert2024tulu,shao2024deepseekmath,jaech2024openai}
is a crucial step to ensure that they generate responses aligned with human preferences or specific criteria.
However, in real-world applications, the requirements of LLM responses are often diverse depending on the situation
and cannot be captured by a single reward function.
Several works have explored the idea of aligning LLMs to multiple reward functions simultaneously~\citep{agnihotri2025multi,dai2024safe},
but it is fundamentally challenging or even impossible since criteria can often conflict with each other~\citep{bai2022training}.

Recently, the paradigm of inference-time alignment has gained popularity,
where the base model is aligned at inference time using techniques such as best-of-N (BoN) sampling~\citep{nakano2021webgpt}, variants of BoN~\citep{verdun2025soft,jinnai2025regularized}, and self-consistency~\citep{wang2023self}.
Since these inference-time alignment algorithms can modify the outputs of the base model at inference time
based on different reward functions or criteria, the model can be aligned to diverse preferences.
However, inference-time alignment is computationally expensive since it often requires multiple forward passes of the base model at inference time.
With limited computational budget, the performance of inference-time alignment can be poor if the base model is not properly trained.
For example, if we aggregate multiple rewards into a single metric and align the model to it,
this results in a model that yields mediocre responses across all rewards, leading to a loss of diversity.
Consequently, this makes it difficult to sample responses tailored to a specific reward using inference-time alignment methods such as BoN.
Thus, we pose the following question:
\textit{Can we train a model that can be efficiently aligned to diverse preferences via inference-time alignment?}

To address this issue, we propose \textit{inference-aware meta-alignment} (IAMA), a novel and general framework for aligning LLMs to multiple reward functions via two-stage alignment procedure:
1) \textit{meta-train} the base model with the inference-aware objective and 2) inference-time alignment with different reward functions.
We show that the two-stage alignment process can be regarded as a model-agnostic meta-learning (MAML)~\citep{finn2017model} in the space of probability measures
and meta-learning problem can be formulated as a non-linear optimization problem with respect to the policy distribution.
This is in contrast to the standard expected reward maximization problems, which are linear with respect to the policy distribution.
Due to the non-linearity of the objective, we cannot directly apply existing policy optimization algorithms such as TRPO~\citep{schulman2015trust}, PPO~\citep{schulman2017proximal}, and GRPO~\citep{shao2024deepseekmath} to solve the problem.
To address this issue, we propose a novel algorithm called \textit{non-linear GRPO},
which linearizes the objective at each iteration using the first-order variation.
While the optimization landscape of LLMs is generally highly non-convex,
our objective in the space of probability measures is shown to be convex for certain inference-time alignment algorithms such as BoN.
Using this property, we prove the linear convergence of the proposed algorithm to the optimal solution.
Empirically, we demonstrate that the proposed method can effectively align LLMs to multiple reward functions simultaneously.

Our contribution can be summarized as follows:
\begin{itemize}
    \item We propose \textit{inference-aware meta-alignment} (IAMA), a novel and general framework which optimizes a single base model so that it can adapt effectively to diverse preferences via inference-time alignment.
          %  without storing multiple models.
          This can be regarded as a generalization of celebrated MAML framework in the space of probability measures.
    \item We show that IAMA can be formulated as a non-linear optimization problem in the space of measures and to solve the problem, we develop \textit{non-linear GRPO}, which can be regarded as a mirror descent with respect to KL divergence.
    \item Theoretically, we prove the convexity of the objective for BoN-type inference-time alignment methods
          and show that non-linear GRPO converges linearly to the optimal solution. We also empirically demonstrate the effectiveness of the proposed method on multiple tasks and models.
          While implementation is straightforward, it significantly pushes Pareto frontiers in a multi-objective RLHF task.
\end{itemize}

\subsection{Other Related Work}
\paragraph{Inference-aware Alignment}
Several recent works have explored the idea of incorporating inference-time alignment algorithms into the training objective~\citep{chow2025inference, tang2025optimizing, balashankarinfalign}.
Another line of work has investigated the pass$@k$ or max$@k$ objectives, which is equivalent to applying BoN sampling at inference time~\citep{walder2025pass,bagirov2025best}.
However, these works typically focus on a single inference-time alignment algorithm and a single reward function.
Therefore, it is still unclear how to effectively align LLMs to multiple reward functions simultaneously via two-stage alignment.
In addition, they consider only BoN sampling and do not provide a general framework that can handle various inference-time alignment algorithms.
The only exception is InfAlign framework~\citep{balashankarinfalign}
but it focuses on a single inference-time alignment algorithm.
In addition, while their problem formulation can be applied to general inference-time alignment algorithms,
the proposed \textit{InfAlign-CTRL} method is applicable only for BoN-type alignment algorithms and limits the reward functions to be win rate vs. a reference model.
Therefore, it is still unclear how to solve the general inference-aware alignment problem even for a single inference-time alignment algorithm.
See Appendix~\ref{sec:related-work-detailed} for more discussion.

\paragraph{Meta-learning}
Meta-learning~\citep{thrun1998learning} aims to train models that can quickly adapt to new tasks via few-shot learning or fast adaptation.
One of the most popular approaches to meta-learning is MAML~\citep{finn2017model}, which learns model parameters such that it can be effectively adapted to multiple tasks via few gradient steps.
However, the inference-time alignment procedure does not update the model parameters and thus, cannot be directly handled by the original MAML framework.
MetaAlign~\citep{zhang2025-metaalign} achieves dynamic alignment by conditioning the model on system prompts that specify preferences.
While effective for explicit instruction following, it may struggle to capture complex or implicit preferences that are not easily articulated in prompts.

\paragraph{Optimization in the Space of Probability Measures}
Optimization problems in the space of probability measures appear in various machine learning problems such as reinforcement learning, variational inference, and generative modeling~\citep{chu2019probability}.
A line of work has explored particle-based methods such as mean-field Langevin dynamics and particle dual averaging~\citep{mei2018mean,nitanda2021particle}.
However, these methods require retraining for each resampling.
Another line of work has studied mirror descent~\citep{frankowski2022mirror,yao2024minimizing} or dual averaging~\citep{kawata2025direct} methods
in the space of probability measures.
Although these methods are theoretically straightforward,
they focus on tractable setups, where the first-order variation can be computed in closed form
or includes computationally expensive steps such as normalizing flows~\citep{yao2024minimizing}.
Therefore, it is still unclear how to solve non-linear optimization problems involved in LLM alignment.
\section{Problem Settings}
We consider a language model which is represented as a probability distribution (or policy) $\pi$ over responses $y$ given a context $x$.
We denote the space of all possible contexts and responses as $\mathcal{X}$ and $\mathcal{Y}$, respectively.
We assume that the contexts are sampled from a distribution $\rho$ over $\mathcal{X}$.
We define the space of all possible policies as $\mathcal{P}$ and the reference policy as $\piref \in \mathcal{P}$.
Throughout this paper, we often omit the dependence on the context $x$ for simplicity
if the notation does not cause confusion.
For instance, we write $\pi(y)$ instead of $\pi(y \mid x)$.
% Following the literature~\citep{balashankarinfalign,gui2024bonbon},
% some results related to BoN in this paper assume that $\mathcal{Y}$ is continuous set and the policy has a density over $\mathcal{Y}$.
% While real-world LLMs typically have discrete outputs, this approximation is known to yield reasonable results~\citep{beirami2025theoretical}.

\paragraph{Training-time Alignment}
Let $r: \mathcal{X} \times \mathcal{Y} \to [0, r_{\max}]$ be a reward function
which measures the quality of the response $y$ given the context $x$.
The objective of alignment is to find a policy $\pi(y \mid x)$ that maximizes the expected reward
while staying close to a reference policy $\piref$.
Typically, the objective is formulated as a KL-regularized expected reward maximization problem:
\begin{align*}
    \mathcal{R}[\pi] = \Expec[y \sim \pi(\cdot \mid x), x \sim \rho]{r(x, y)} - \beta \KL[\pi \mid \piref],
\end{align*}
where $\KL$ is the Kullback-Leibler divergence defined as $\KL[\pi \mid \piref] = \Expec[y\sim \pi(\cdot \mid x), x \sim \rho]{\log \frac{\pi(y \mid x)}{\piref(y \mid x)}}$
and $\beta \geq 0$ is a regularization parameter.

\paragraph{Inference-time Alignment}
Rarely do we use the aligned model as it is at inference time.
Instead, we often employ inference-time alignment algorithms that modify the outputs of the base model $\pi$ at inference time.
One of the most popular methods is best-of-N (BoN) sampling~\citep{nakano2021webgpt},
where we sample multiple responses from the base model $\pi$ and select the response with the highest reward.
Soft BoN~\citep{verdun2025soft} is a soft version of BoN, which (approximately) samples responses from an exponentially tilted distribution of rewards.
These are equivalent to transforming the base model $\pi$ to a new policy and sampling a response from the transformed policy.
We denote such an inference-time alignment algorithm as a functional $\mathcal{T}:\mathcal{P} \to \mathcal{P}$ that maps a base model $\pi \in \mathcal{P}$
to an aligned model $\mathcal{T}[\pi]$.
In the case of BoN with $N$ samples, the transformed policy is given as
$N \cdot C[\pi](y)^{N-1} \pi(y)$~\citep{beirami2025theoretical,gui2024bonbon,amini2025variational}
where $C[\pi](y) = \int_{r(y') \leq r(y)} \pi(y') \d{y'}$ is the cumulative distribution function of rewards under $\pi$,
and in the case of soft BoN with temperature parameter $\tau > 0$, it is given as
$\mathcal{T}[\pi](y) \propto \exp(r(y) / \tau) \pi(y)$~\citep{verdun2025soft}.
Other than (Soft) BoN, there exist various inference-time alignment algorithms such as self-consistency~\citep{wang2023self} and
a MCMC-based method~\citep{karan2025reasoning} and our general framework can handle these algorithms as well.

\section{Proposed Framework: Inference-aware Meta-Alignment}\label{sec:iama}
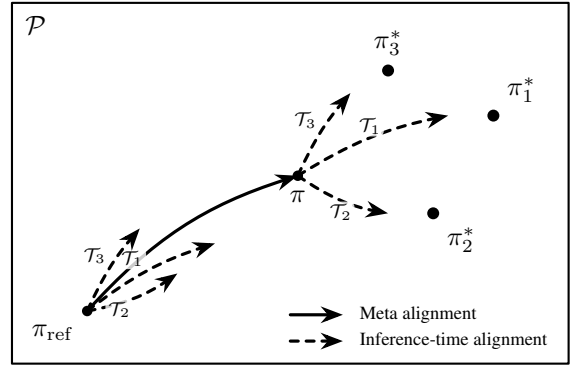
\begin{figure}[t]
    \centering
    \begin{tikzpicture}[>=Stealth, line cap=round, line join=round, font=\normalsize]
        % ---------- Styles ----------
        \tikzset{
            pt/.style={circle, fill=black, inner sep=1.4pt},
            ptstar/.style={circle, fill=black, inner sep=1.6pt},
            meta/.style={->, very thick},
            adapt/.style={->, very thick, dashed},
            baseline/.style={->, very thick, dashed},
            manifold/.style={draw, thick},
            tasklbl/.style={font=\scriptsize, inner sep=1pt, fill=white, fill opacity=0.8, text opacity=1, rounded corners=2pt},
            lbl/.style={align=center}
        }

        % ---------- Rectangular "distribution space" ----------
        % 好みでサイズ調整してください
        \draw[manifold] (-4.2,-2.3) rectangle (3.2,2.5);

        % \mathcal{P} を左上に配置（矩形の左上基準）
        \node[anchor=north west] at ([xshift=2pt,yshift=-2pt]current bounding box.north west) {$\mathcal{P}$};

        % ---------- Points ----------
        \coordinate (pref) at (-3.2,-1.6);
        \coordinate (pi)   at (-0.4,0.2);

        % Task optima (pi*_k)
        \coordinate (s1) at (2.2,1.0);
        \coordinate (s2) at (1.4,-0.3);
        \coordinate (s3) at (0.8,1.6);

        % baselines
        \coordinate (u1) at (-1.5,-0.7);
        \coordinate (u2) at (-2.0,-1.1);
        \coordinate (u3) at (-2.5,-0.5);

        % proposed points (after inference-time alignment)
        \coordinate (t1) at (1.6,1.0);
        \coordinate (t2) at (0.8,-0.3);
        \coordinate (t3) at (0.3,1.3);

        % ---------- Draw key points ----------
        \node[pt, label={[lbl]below left:$\pi_{\mathrm{ref}}$}] at (pref) {};
        \node[pt, label={[lbl]below:$\pi$}] at (pi) {};

        \node[ptstar, label={[lbl]above right:$\pi^*_{1}$}] at (s1) {};
        \node[ptstar, label={[lbl]below right:$\pi^*_{2}$}] at (s2) {};
        \node[ptstar, label={[lbl]above:$\pi^*_{3}$}] at (s3) {};

        % ---------- Meta-learning: pi_ref -> pi ----------
        \draw[meta] (pref) to[bend left=15] (pi);

        % ---------- Fast adaptation from pi to each task (proposed) ----------
        \draw[adapt] (pi) to[bend left=10]
        node[midway, above, tasklbl]{$\mathcal{T}_1$} (t1);

        \draw[adapt] (pi) to[bend right=12]
        node[midway, below, tasklbl]{$\mathcal{T}_2$} (t2);

        \draw[adapt] (pi) to[bend left=8]
        node[midway, above left, tasklbl]{$\mathcal{T}_3$} (t3);

        % ---------- Baseline: learn each task directly from pi_ref ----------
        \draw[baseline] (pref) to[bend left=10]
        node[midway, above left, tasklbl]{$\mathcal{T}_1$} (u1);

        \draw[baseline] (pref) to[bend right=12]
        node[midway, below left, tasklbl]{$\mathcal{T}_2$} (u2);

        \draw[baseline] (pref) to[bend left=8]
        node[midway, above left, tasklbl]{$\mathcal{T}_3$} (u3);

        % Legend box (bottom right inside manifold)
        \begin{scope}[shift={(-0.7,-2.3)}]

            % Meta alignment (solid)
            \draw[meta] (0.2,0.65) -- (0.9,0.65);
            \node[anchor=west, font=\scriptsize] at (1.0,0.65) {Meta alignment};

            % Inference-time alignment (dashed)
            \draw[adapt] (0.2,0.3) -- (0.9,0.3);
            \node[anchor=west, font=\scriptsize] at (1.0,0.3) {Inference-time alignment};
        \end{scope}
    \end{tikzpicture}

    \caption{Illustration of inference-aware meta-alignment (IAMA).
    The base model $\piref$ is meta-trained so that it can be effectively adapted to multiple task optima $\{\pi^*_{i}\}_{i=1}^m$
    via inference-time alignment algorithms $\{\mathcal{T}_i\}_{i=1}^m$.
    Compared to deploying $\piref$ directly, IAMA enables obtaining better aligned models for each task with limited computational budget at inference time.}
    \label{fig:iama}
\end{figure}

As we have seen in the previous section, modern alignment procedure often involves two stages: 1) training-time alignment of the base model $\pi$ and 2) inference-time alignment via an algorithm $\mathcal{T}$.
Considering the multiple inference-time alignment algorithms $\{\mathcal{T}_i\}_{i=1}^m$ and corresponding reward functions $\{r_i\}_{i=1}^m$,
a natural question arises: \textit{What is the best model $\pi$ that can be effectively aligned to diverse rewards via inference-time alignment algorithms?}
To address this question, we propose the following objective for training-time alignment:
\begin{align}
    \mathcal{R}[\pi] = \underbrace{g(R_1[\pi], \dots, R_m[\pi])}_{R[\pi]} - \beta \KL[\pi \mid \piref], \label{eq:iama-objective}
\end{align}
where $R_i[\pi] = \Expec[{y \sim \mathcal{T}_i[\pi], x \sim \rho}]{r_i(x, y)}$,
and $g:\R^m \to \R$ is an aggregation function.
The aggregation function $g$ can be weighted sum $\sum_{i=1}^m w_i R_i[\pi]$ for some weights $w_i \geq 0$,
worst-case performance $\min_{i} R_i[\pi]$, or smooth minimum $-\frac{1}{\gamma} \log(\sum_{i=1}^m w_i \exp(-\gamma R_i[\pi]))$ for $\gamma > 0$,
which interpolates between the average and worst-case performance.
This objective can be regarded as a generalization of InfAlign framework~\citep{balashankarinfalign}
to multiple inference-time alignment algorithms and general reward functions beyond win rate against a reference model.

Similar motivation has been explored in the context of model-agnostic meta-learning (MAML)~\citep{finn2017model},
where the goal is to find parameters that can be effectively adapted to multiple tasks via few gradient steps.
On the other hand, in the inference-time alignment, the model parameters are not updated during adaptation,
but the base model $\pi$ is transformed to a new policy $\mathcal{T}_i[\pi]$ in the space of probability measures.
We illustrate the concept of proposed framework in Fig.~\ref{fig:iama}.

While the formulation looks similar to the standard KL-regularized expected reward maximization,
the key difference is that the reward term $R[\pi]$ is non-linear with respect to the base model $\pi$ even for a weighted sum due to the presence of the inference-time alignment algorithms $\{\mathcal{T}_i\}_{i=1}^m$.
This is in contrast to the expected reward $E_{\pi}[r]$, which is linear with respect to $\pi$.
Due to the non-linearity, we need to develop new algorithms to solve the optimization problem.

\subsection{Comparison with Naive Approach}~\label{sec:comparison-naive}
A naive approach to align LLMs to multiple reward functions is to train a base model $\pi$
using weighted sum of expected rewards without considering inference-time alignment.
That is, we optimize the following objective:
\begin{align*}
    \mathcal{R}_{\mathrm{naive}}[\pi] = \sum_{i=1}^m w_i \Expec[y \sim \pi, x \sim \rho]{r_i(x, y)} - \beta \KL[\pi \mid \piref].
\end{align*}
To see the difference between the naive approach and IAMA, let us consider a simple case where $\mathcal{Y} = [0, 1]$.
Suppose that there are two reward functions:
\begin{align*}
    r_1(y) & = 1 - y^2, r_2(y) = 1 - (1 - y)^2.
\end{align*}
These rewards are completely conflicting since $r_1$ is maximized at $y=0$ while $r_2$ is maximized at $y=1$.
If we optimize the naive objective with equal weights $w_1 = w_2 = 0.5$ and ignore the KL regularization for simplicity,
the optimal policy is given as $\pi^*_{\mathrm{naive}} = \delta_{0.5}$, which always outputs $y=0.5$.
Here, $\delta_{a}$ is the Dirac delta distribution centered at $a$.
On the other hand, if we consider inference-time alignment via BoN sampling with $N \geq 2$, we have the following result:
\begin{proposition}\label{prop:bon-optimal-policy}
    Consider the above reward functions $r_1$ and $r_2$.
    Let $\mathcal{T}_1$ and $\mathcal{T}_2$ be BoN sampling with $N \geq 2$.
    Then, the optimal policy that maximizes the IAMA objective~\eqref{eq:iama-objective} with equal weights $w_1 = w_2 = 0.5$ and $\beta=0$ is given as
    \begin{align*}
        \pi^*_{\mathrm{IAMA}}(y) = \frac{\alpha y^{\alpha - 1} (1 - y)^{\alpha - 1}}{(y^{\alpha} + (1 - y)^\alpha)^2}
    \end{align*}
    where $\alpha = \frac{1}{N-1}$.
\end{proposition}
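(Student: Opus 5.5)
The plan is to rewrite the IAMA objective entirely in terms of the cumulative distribution function $F(y)=\pi([0,y])$ of the base policy, and then maximize pointwise in $y$.

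\textbf{Identifying the two BoN outputs.} Since $r_1(y)=1-y^2$ is strictly decreasing on $[0,1]$, BoN under $r_1$ returns the minimum $Y_{(1)}$ of $N$ i.i.d.\ draws from $\pi$. Since $r_2$ is strictly increasing, BoN under $r_2$ returns the maximum $Y_{(N)}$. This agrees with the transformed-policy formula $N\,C[\pi]^{N-1}\pi$, with $C_1[\pi](y)=1-F(y^-)$ and $C_2[\pi](y)=F(y)$. Working directly with order statistics avoids any issue with atoms or ties, because $\mathcal{T}_i[\pi]$ is exactly the law of these order statistics.

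\textbf{Reducing the objective to an integral in $F$.} With $\beta=0$ and $w_1=w_2=\tfrac12$, maximizing $\mathcal{R}$ is equivalent to minimizing
\[
\mathbb{E}[Y_{(1)}^2]+\mathbb{E}[(1-Y_{(N)})^2].
\]
Apply the tail formula $\mathbb{E}[h(Z)]=\int_0^1 h'(y)\,P(\cdot)\,dy$ to each term. We have $P(Y_{(1)}>y)=(1-F(y))^N$, and $P(Y_{(N)}<y)=F(y^-)^N$, which equals $F(y)^N$ for almost every $y$. Hence the quantity to minimize is
\[
J[F]=\int_0^1 2\bigl[\,y\,(1-F(y))^N+(1-y)\,F(y)^N\,\bigr]\,dy,
\]
taken over all CDFs $F$ on $[0,1]$.

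\textbf{Pointwise minimization.} Relax the constraint to arbitrary measurable $F:[0,1]\to[0,1]$ and minimize the integrand separately for each fixed $y\in(0,1)$. The function $\phi_y(u)=y(1-u)^N+(1-y)u^N$ is strictly convex on $[0,1]$ for $N\ge2$. Its derivative is $-0$ wait, precisely $\phi_y'(u)=-Ny(1-u)^{N-1}+N(1-y)u^{N-1}$, which is negative at $u=0$ and positive at $u=1$. So $\phi_y$ has a unique interior minimizer, determined by $\bigl(u/(1-u)\bigr)^{N-1}=y/(1-y)$. With $\alpha=1/(N-1)$ this gives
\[
F^*(y)=\frac{y^\alpha}{y^\alpha+(1-y)^\alpha}.
\]

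\textbf{Checking that $F^*$ is an admissible CDF.} The function $F^*$ is continuous and strictly increasing, with $F^*(0)=0$ and $F^*(1)=1$. It is therefore a genuine CDF, so the relaxation is tight. Moreover, any optimal $\pi$ must have $F=F^*$ almost everywhere, and hence everywhere by right-continuity, which gives uniqueness.

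\textbf{Computing the density.} Differentiating with the quotient rule, the numerator becomes
\[
\alpha y^{\alpha-1}(1-y)^{\alpha}+\alpha y^{\alpha}(1-y)^{\alpha-1}=\alpha y^{\alpha-1}(1-y)^{\alpha-1},
\]
which yields exactly the stated $\pi^*_{\mathrm{IAMA}}$.

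\textbf{Main obstacle.} The step needing the most care is the reduction in the second paragraph: it must hold for arbitrary policies, including ones with atoms, so that the optimum is over all of $\mathcal{P}$ and not only over policies with densities. Using order statistics and tail integrals handles this, since the left and right limits of $F$ differ on at most countably many points. The remaining steps are routine.
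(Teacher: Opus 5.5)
Your proposal is correct and follows essentially the same route as the paper's proof. Both identify the two BoN outputs with the order statistics $Y_{(1)}$ and $Y_{(N)}$, use the tail formula to rewrite the objective as $\int_0^1 \phi_y(F(y))\,dy$ with $\phi_y(u)=y(1-u)^N+(1-y)u^N$, minimize the strictly convex $\phi_y$ pointwise, and check that the minimizer $F^*$ is a genuine CDF. Beyond that, you fill in details the paper leaves implicit: the handling of atoms ($F(y^-)$ versus $F(y)$), tightness of the relaxation, uniqueness via right-continuity, and the explicit density computation. Just delete the stray ``$-0$ wait'' before you state $\phi_y'$.
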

See Appendix~\ref{sec:proof-bon-optimal-policy} for the proof.
We show the optimal policy $\pi^*_{\mathrm{IAMA}}$ for $N=2,4,8$ in Fig.~\ref{fig:optimal-policy}.
\begin{figure}[t]
    \centering
    \includegraphics[width=0.5\textwidth]{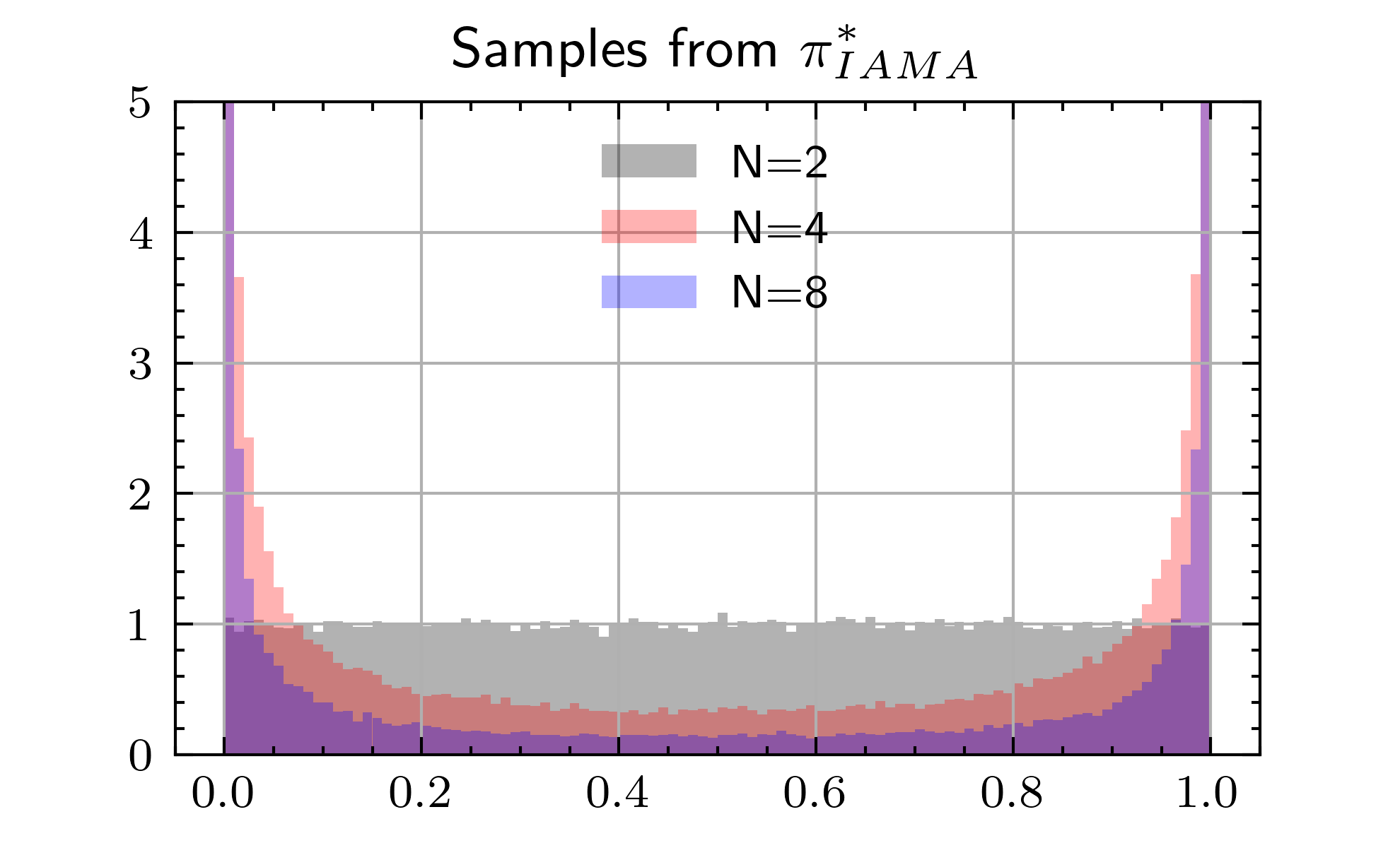}
    \caption{Optimal IAMA policies $\pi^*_{\mathrm{IAMA}}$ with BoN sampling ($N=2, 4, 8$).
        The optimal policy $\pi^*_{\mathrm{IAMA}}$ with large $N$ produces diverse outputs to cover both modes at $y=0$ and $y=1$.}
    \label{fig:optimal-policy}
\end{figure}
In the case of BoN, the base policy generates diverse outputs to cover both modes at $y=0$ and $y=1$,
which allows the inference-time alignment to select high-reward outputs for both reward functions.
This is in contrast to the naive optimal policy $\pi^*_{\mathrm{naive}}$,
which produces only mediocre outputs.
In addition, the optimal policy varies
depending on the computational budget at inference time (i.e., the number of samples $N$).
This simple example clearly addresses the question of why IAMA should be considered.

\begin{remark}
    Another possible approach is to align multiple models $\{\pi_i\}_{i=1}^m$ to each reward function $\{r_i\}_{i=1}^m$ separately
    and use one of them at inference time depending on the user's preference~\citep{barrett2008learning,rame2023rewarded}.
    However, this approach requires storing multiple models, which can be impractical for large models used in real-world applications.
    In addition, the preference of users may not be known by the provider even at inference time.
    In such cases, this approach cannot be applied
    while IAMA can still be used if users select best responses based on their own preferences.
\end{remark}
\section{Proposed Method: Non-linear GRPO}
Since the objective in Eq.~\eqref{eq:iama-objective} is non-linear with respect to the policy $\pi$,
we cannot directly apply existing policy optimization algorithms such as TRPO~\citep{schulman2015trust}, PPO~\citep{schulman2017proximal}, and GRPO~\citep{shao2024deepseekmath}.
To address this issue, we propose a non-linear GRPO algorithm,
which is a non-trivial extension of GRPO-type algorithms~\citep{shao2024deepseekmath,zheng2025group,liu2025understanding}, which sample multiple responses for a given context.

From a different perspective, \citet{balashankarinfalign} proposed InfAlign-CTRL algorithm to solve a special case of Eq.~\eqref{eq:iama-objective} using reward transformation
but it is applicable only for 1) BoN-type algorithms, 2) win-rate maximization, and 3) single reward setting.
This limitation is inevitable since such transformation can be computed only under these constraints.
Our approach eliminates all three of these constraints and is therefore of independent interest.

First, we revisit the standard TRPO-type algorithms (including TRPO, PPO, GRPO, and their variants).
For notational convenience, we consider the minimization of $\mathcal{L}[\pi] := -\mathcal{R}[\pi]$ instead of maximization of $\mathcal{R}[\pi]$.
Let $\pi_t$ be the current policy at iteration $t$.
Then, TRPO-type algorithms can be interpreted as an approximation of the following mirror descent method
in the space of probability distributions:
\begin{align*}
    \pi_{t+1}                & = \argmin_{\pi} \tilde{\mathcal{L}}[\pi] + \frac{1}{\eta} \KL[\pi \mid \pi_t],        \\
    \tilde{\mathcal{L}}[\pi] & := - \Expec[y \sim \pi_t]{\frac{\pi(y)}{\pi_t(y)} r(y)} + \beta \KL[\pi \mid \piref],
\end{align*}
where $\eta > 0$ is a step-size parameter.
In TRPO, the proximal term $\KL[\pi \mid \pi_t]$ is replaced with a constraint $\KL[\pi \mid \pi_t] \leq \varepsilon$ for some $\varepsilon > 0$
and in PPO and GRPO, it is replaced with the following clipped surrogate objective.
\begin{align*}
    \pi_{t+1}                & = \argmin_{\pi} \tilde{\mathcal{L}}[\pi],                                                                                           \\
    \tilde{\mathcal{L}}[\pi] & := - \Expec[y \sim \pi_t]{\min\left( \frac{\pi(y)}{\pi_t(y)} r(y), \clip_\epsilon\left(\frac{\pi(y)}{\pi_t(y)}\right) r(y) \right)} \\
                             & \quad + \beta \KL[\pi \mid \piref],
\end{align*}
where $\clip_\epsilon(z) = \max(\min(z, 1+\epsilon), 1-\epsilon)$ for some $\epsilon > 0$.

We extend the above TRPO-type algorithms to handle the non-linear reward term $R[\pi]$.
First, we introduce the notion of first-order variation (functional derivative).
\begin{definition}
    A functional $\fdv{R}{\pi}:\mathcal{P} \times \mathcal{Y} \to \R$ is called the first-order variation (or functional derivative) of a functional $R:\mathcal{P} \to \R$
    if for all $\pi, \pi' \in \mathcal{P}$,
    \begin{align*}
        \odv{R[\pi + \epsilon(\pi' - \pi)]}{\epsilon} \Bigg|_{\epsilon = 0} & = \int \fdv{R}{\pi}[\pi](y) \cdot (\pi' - \pi)(y) \d{y}.
    \end{align*}
\end{definition}
Note that the first variation admits a freedom under constant shifts.
In the following, we assume that the first variation $\fdv{R}{\pi}[\pi](y)$ exists for all $\pi \in \mathcal{P}$ and $y \in \mathcal{Y}$.
To simplify the notation, we denote $\fdv{R}{\pi}[\pi]$ as $\d{R}[\pi]$ and $\int \d{R}[\pi](y) \cdot g(y) \d{y}$ as $\langle \d{R}[\pi], g \rangle$ for a function $g:\mathcal{Y} \to \R$.
Using the first-order variation, we approximate the objective $R[\pi]$ around the current policy $\pi_t$ as
\begin{align*}
    R[\pi] & \approx R[\pi_t] + \int \pdv{R}{\pi}[\pi_t](y) \cdot ( \pi(y) - \pi_t(y) ) \d{y}.
\end{align*}
Using this approximation, we define the surrogate loss as
\begin{align*}
    \tilde{\mathcal{L}}[\pi] & = - \left( R[\pi_t] + \int \pdv{R}{\pi}[\pi_t](y) \cdot ( \pi(y) - \pi_t(y) ) \d{y} \right)                   \\
                             & \quad + \beta \KL[\pi \mid \piref]                                                                            \\
                             & = - \Expec[y\sim \pi_t]{\frac{\pi}{\pi_t}\pdv{R}{\pi}[\pi_t](y)} + \beta \KL[\pi \mid \piref] + \text{const.}
\end{align*}
Then, we update the policy by solving the following proximal objective:
\begin{align*}
    \pi_{t+1} = \argmin_{\pi} \tilde{\mathcal{L}}[\pi] + \frac{1}{\eta} \KL[\pi \mid \pi_t].
\end{align*}
Aside from the fact that the term $\KL[\pi \mid \piref]$ remains the same,
the above update can be interpreted as a mirror descent in the space of probability measures with KL divergence as the Bregman divergence.

While the above update is conceptually straightforward, it is not directly implementable in practice
since the first-order variation $\pdv{R}{\pi}[\pi_t](y)$ generally depends on the current policy $\pi_t$ in a complex manner.
To overcome this issue, we propose to approximate it using empirical samples.
In GRPO-type algorithms, we sample multiple responses from the current policy $\pi_t$ for a given context $x$.
We denote the sampled responses as $\{y_j\}_{j=1}^M$ and the empirical distribution of the samples as $\hat \pi_t = \frac{1}{M} \sum_{j=1}^M \delta_{y_j}$.
Note that we can use other approximations such as kernel density estimation to obtain a smoother approximation of the current policy~\citep{silverman2018density}.
Then, we approximate the first-order variation as $\pdv{R}{\pi}[\hat \pi_t](y)$.
Using this approximation, we define the surrogate loss as
\begin{align*}
    \hat{\mathcal{L}}[\pi] & = - \Expec[y\sim \pi_t]{\frac{\pi}{\pi_t}\pdv{R}{\pi}[\hat \pi_t](y)} + \beta \KL[\pi \mid \piref],
\end{align*}
and we update the policy by approximately solving the following proximal objective:
\begin{align*}
    \pi_{t+1} \simeq \argmin_{\pi} \hat{\mathcal{L}}[\pi] + \frac{1}{\eta} \KL[\pi \mid \pi_t].
\end{align*}
The above update can be interpreted as a GRPO update with a modified reward function $\tilde r_t(y) = \pdv{R}{\pi}[\hat \pi_t](y)$.
That is, we can modify the original GRPO by only introducing a drop-in reward function while keeping the rest of the algorithm unchanged.
This makes the non-linear GRPO algorithm easy to implement in practice.
Note that we can also use other GRPO-type algorithms such as GSPO~\citep{zheng2025group} as the base algorithm instead of GRPO.
In addition, since non-linear GRPO is quite general, it can be applied to various problems beyond IAMA
and includes some existing algorithms as special cases.
See Appendix~\ref{sec:nonlinear-grpo-applications} for more discussion.
We summarize the non-linear GRPO algorithm in Algorithm~\ref{alg:nonlinear-grpo}.
In practice, we can further approximate the above update by replacing the expectation with empirical samples
and proximal term with the clipped surrogate objective as in PPO and GRPO.
Please refer to Algorithm~\ref{alg:practical-nonlinear-grpo} for the full description of practical implementation of non-linear GRPO.

\begin{algorithm}[H]
    \caption{Non-linear GRPO}
    \begin{algorithmic}[1]\label{alg:nonlinear-grpo}
        \STATE \textbf{Input:} Initial policy $\pi_0$, reference policy $\piref$, objective $R$, regularization parameter $\beta$, step size $\eta$, number of iterations $T$
        \FOR{$t = 0, 1, \ldots, T-1$}
        \STATE Sample $M$ responses $\{y_j\}_{j=1}^M$ from $\pi_t$
        \STATE Compute approximated functional derivative: $\tilde {r}_t(y) = \pdv{R}{\pi}[\hat \pi_t](y)$.
        \STATE Define surrogate loss: $\hat{\mathcal{L}}[\pi] = -E_{y \sim \pi_t}[\frac{\pi(y)}{\pi_t(y)} \cdot \tilde r_t(y)] + \beta \KL[\pi \mid \piref]$
        \STATE Update policy: $\pi_{t+1} \simeq \arg\min_\pi \hat{\mathcal{L}}[\pi] + \frac{1}{\eta} \KL[\pi \mid \pi_t]$
        \ENDFOR
        \STATE \textbf{Return:} Final policy $\pi_T$
    \end{algorithmic}
\end{algorithm}

\subsection{First-Order Variation for (Soft) BoN}
Since our formulation is quite general, it can accommodate various inference-time alignment algorithms.
In this section, we consider two important cases including 1) BoN and 2) soft BoN (exponential tilting)
and derive the concrete forms of the functional derivatives required to implement the non-linear GRPO algorithm.
From the chain rule, the first variation of $R$ can be computed as
\begin{align*}
    \pdv{R}{\pi}[\pi](y) & = \sum_{i=1}^m \pdv{g}{R_i}(R_1[\pi], \dots, R_m[\pi]) \cdot \pdv{R_i}{\pi}[\pi](y).
\end{align*}
Thus, to compute the first variation of $R$, it suffices to compute the first-order variations of $R_i$ for each inference-time alignment algorithm $\mathcal{T}_i$ and reward function $r_i$.
The following proposition provides formulas for the first-order variations of $R_i$ for BoN and soft BoN, respectively.
\begin{proposition}\label{prop:first-order-variation-bon}
    Let $R[\pi] = \Expec[{y\sim \bon_N[\pi]}]{r(y)}$.
    Assume that the distribution of $r(y)$ under $\pi$ has a density.
    Then, the first-order variation of $R$ can be computed as
    \begin{align*}
        \pdv{R}{\pi}[\pi](y)
         & = - \int_{r(y)}^{r_{\max}} N \cdot (C[\pi](r))^{N-1} \d{r}.
    \end{align*}
\end{proposition}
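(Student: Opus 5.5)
Since the statement concerns only the law of $r(y)$, I will first rewrite $R[\pi]$ as a functional of the reward CDF. Overloading notation as in the statement, write $C[\pi](s) = \int_{r(y')\le s}\pi(y')\d{y'}$ for $s\in[0,r_{\max}]$, so that $C[\pi](y)$ in the definition of BoN equals $C[\pi](r(y))$. The BoN output has reward equal to the maximum of $N$ i.i.d.\ draws of $r(y)$, $y\sim\pi$, so its reward CDF is $C[\pi](s)^N$. Because rewards lie in $[0,r_{\max}]$, the tail-integral formula for a nonnegative variable gives
\begin{align*}
    R[\pi] = \int_0^{r_{\max}} \bigl(1 - C[\pi](s)^N\bigr)\d{s} = r_{\max} - \int_0^{r_{\max}} C[\pi](s)^N \d{s}.
\end{align*}
This representation makes the variation computation essentially one-dimensional. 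The density assumption is what lets me identify the BoN law with $N C^{N-1}\pi$, with no atoms and hence no tie issues. I also check that this agrees with $\Expec[y\sim\pi]{N C[\pi](y)^{N-1} r(y)}$ via the substitution $u = C[\pi](r(y))$.

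Next I perturb along the segment. Let $\pi_\epsilon = \pi + \epsilon(\pi'-\pi)$. The CDF is linear in the measure, so
\begin{align*}
    C[\pi_\epsilon](s) = C[\pi](s) + \epsilon\, D(s), \qquad D(s) = \int \mathbf{1}\{r(y')\le s\}(\pi'-\pi)(y')\d{y'}.
\end{align*}
Differentiating under the integral, which is justified by dominated convergence since the integrand is a polynomial in $\epsilon$ with coefficients bounded by constants on a bounded interval, I get
\begin{align*}
    \odv{R[\pi_\epsilon]}{\epsilon}\Big|_{\epsilon=0} = -\int_0^{r_{\max}} N\, C[\pi](s)^{N-1} D(s)\d{s}.
\end{align*}

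Then I substitute the definition of $D$ and swap the $s$- and $y'$-integrals by Fubini, which is valid because everything is bounded. The indicator $\mathbf{1}\{r(y')\le s\}$ restricts $s$ to $[r(y'), r_{\max}]$. This yields
\begin{align*}
    \int (\pi'-\pi)(y') \left( -\int_{r(y')}^{r_{\max}} N C[\pi](s)^{N-1}\d{s} \right)\d{y'},
\end{align*}
which is exactly the claimed first variation, up to the constant-shift freedom noted after the definition.

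The only delicate step, and the one I expect to be the main obstacle, is rigor rather than computation. Two points need care. First, $\pi'$ may not satisfy the density assumption. The formula for $R$ in terms of $C$ holds for any law, since the maximum of i.i.d.\ variables always has CDF $C^N$; however, the BoN density expression $N C^{N-1}\pi$ presumes continuity. I would therefore define $R$ via the max-of-$N$ formula, so that it is valid along the whole segment. Second, the interchange of derivative and integral must be justified. Both issues are handled by the boundedness of $r$ and of $C$.
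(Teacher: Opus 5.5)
Your proof is correct and follows essentially the same route as the paper. The paper also reduces $R[\pi]$ to $r_{\max} - \int_0^{r_{\max}} C[\pi](r)^N \d{r}$, there via integration by parts against the density $N C[\pi]^{N-1}\pi$ with antiderivative $F(z)=z^N$, and then differentiates through the CDF using that the first variation of $C[\pi](r)$ is $\mathbf{1}\{r(y)\le r\}$, which is exactly your perturbation-plus-Fubini step; your tail-integral derivation of the representation is a slightly more careful variant, since it does not need the density and so stays valid along the whole segment $\pi+\epsilon(\pi'-\pi)$ even when $\pi'$ has reward atoms.
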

\begin{proposition}\label{prop:first-order-variation-softbon}
    Let $R[\pi] = \Expec[{y\sim \sbon_\tau[\pi]}]{r(y)}$.
    Then, the first-order variation of $R$ can be computed as
    \begin{align*}
        \pdv{R}{\pi}[\pi](y)
         & = \frac{r(y) \exp(r(y)/\tau)}{\int \exp(r(z)/\tau) \cdot \pi(z) \d{z}}                                                             \\
         & \quad - \frac{\exp(r(y)/\tau) \cdot \int r(z) \exp(r(z)/\tau) \cdot \pi(z) \d{z}}{\ab(\int \exp(r(z)/\tau) \cdot \pi(z) \d{z})^2}.
    \end{align*}
\end{proposition}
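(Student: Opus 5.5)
The plan is to write $R$ as a ratio of two functionals that are \emph{linear} in $\pi$ and then apply the quotient rule along the segment $\pi + \epsilon(\pi' - \pi)$. Since $\sbon_\tau[\pi](y) = \exp(r(y)/\tau)\pi(y)/Z[\pi]$, we have
\begin{align*}
    R[\pi] = \frac{A[\pi]}{Z[\pi]}, \qquad A[\pi] := \int r(z)\exp(r(z)/\tau)\,\pi(z)\d{z}, \qquad Z[\pi] := \int \exp(r(z)/\tau)\,\pi(z)\d{z}.
\end{align*}
Because $r$ takes values in $[0, r_{\max}]$, the weights $\exp(r/\tau)$ and $r\exp(r/\tau)$ are bounded. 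So both integrals are finite for every $\pi \in \mathcal{P}$. Moreover $Z[\pi] \geq 1$, since $\exp(r/\tau) \geq 1$ and $\pi$ is a probability measure.

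First fix $\pi, \pi' \in \mathcal{P}$ and set $\pi_\epsilon = \pi + \epsilon(\pi' - \pi)$. By linearity,
\begin{align*}
    A[\pi_\epsilon] = A[\pi] + \epsilon \langle r e^{r/\tau}, \pi' - \pi\rangle, \qquad Z[\pi_\epsilon] = Z[\pi] + \epsilon \langle e^{r/\tau}, \pi' - \pi\rangle .
\end{align*}
Hence $\epsilon \mapsto R[\pi_\epsilon]$ is a ratio of two affine functions of $\epsilon$. For $\epsilon \in [0,1]$, $\pi_\epsilon$ is a convex combination of probability measures, so the denominator satisfies $Z[\pi_\epsilon] \geq 1 > 0$. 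The map is therefore differentiable at $\epsilon = 0$, taking a one-sided derivative if one restricts to $\epsilon \in [0,1]$. This is the only point needing care, and it is settled by the positivity of $Z$.

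Second, apply the quotient rule at $\epsilon = 0$:
\begin{align*}
    \odv{R[\pi_\epsilon]}{\epsilon}\Big|_{\epsilon=0} = \frac{\langle r e^{r/\tau}, \pi'-\pi\rangle}{Z[\pi]} - \frac{A[\pi]\,\langle e^{r/\tau}, \pi'-\pi\rangle}{Z[\pi]^2}.
\end{align*}
Collecting both terms under a single integral against $(\pi' - \pi)(y)$ gives $\int \fdv{R}{\pi}[\pi](y)\,(\pi'-\pi)(y)\d{y}$, where
\begin{align*}
    \fdv{R}{\pi}[\pi](y) = \frac{r(y)e^{r(y)/\tau}}{Z[\pi]} - \frac{e^{r(y)/\tau} A[\pi]}{Z[\pi]^2}.
\end{align*}
This is exactly the claimed expression. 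It matches the definition of the first-order variation for every pair $\pi, \pi'$, and it is determined only up to an additive constant, consistent with the remark after the definition.

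No real obstacle is expected. The only nontrivial step is justifying the differentiation and the interchange of derivative and integral. Here this is immediate, because the dependence on $\epsilon$ is affine inside bounded integrals and the denominator is bounded away from zero.
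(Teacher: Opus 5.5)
Your proof is correct, but it reaches the formula by a different route than the paper.

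The paper sets $\lambda = 1/\tau$ and rewrites the objective as $R[\pi] = \frac{d}{d\lambda}\log\int \exp(\lambda r(z))\,\pi(z)\,dz$. It then computes the first variation of this log-partition functional, which is $\exp(\lambda r(y))/\int \exp(\lambda r(z))\,\pi(z)\,dz$. Finally it differentiates that expression in $\lambda$, tacitly exchanging $\frac{d}{d\lambda}$ with the first variation.

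You instead write $R = A/Z$ with $A$ and $Z$ linear in $\pi$, and apply the quotient rule along the segment $\pi + \epsilon(\pi' - \pi)$.

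Your version has three advantages:
\begin{itemize}
\item No interchange of derivatives has to be justified; the paper leaves this step implicit.
\item The positivity $Z[\pi] \ge 1$, which makes the quotient well defined near $\epsilon = 0$, is stated explicitly.
\item The argument works verbatim for any bounded positive reweighting $w(y)$ in place of $\exp(r(y)/\tau)$. It gives the variation $\frac{w(y)}{Z[\pi]}\bigl(r(y) - R[\pi]\bigr)$, which is also a more transparent way to read the claimed expression: tilted weight times centered reward.
\end{itemize}

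The paper's version is shorter and exposes the cumulant-generating-function structure of exponential tilting. However, it depends on the identity $r\,e^{\lambda r} = \partial_\lambda e^{\lambda r}$, so it is tied to exponential weights.
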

See Appendix~\ref{sec:proof-first-order-variations} for the proofs.
The above formulas involve integrals but they are reduced to sum over empirical samples if we replace $\pi$ with the empirical distribution $\hat \pi$.
See Appendix~\ref{sec:practical-first-order-variations} for details.
Therefore, we can easily compute the approximated functional derivatives required in the non-linear GRPO algorithm.

\section{Convergence Analysis}\label{sec:convergence_analysis}
In this section, we provide the convergence guarantees of the non-linear GRPO algorithm.
The optimization of LLMs in the space of parameters is generally non-convex and challenging to analyze.
On the other hand, if we regard the optimization in the space of probability distributions (policies),
the problem becomes simpler.
Specifically, in the case of weighted sum and (smooth) minimum of BoN-type objectives,
$R[\pi]$ is concave as shown in Section~\ref{sec:bon-properties}.
We also discuss other cases where $R[\pi]$ is concave in Appendix~\ref{sec:nonlinear-grpo-applications}.

First, we provide convergence guarantees of non-linear GRPO for general smooth and concave objectives.
Note that the concavity is required only for the convergence analysis and the proposed algorithm can be applied to non-concave objectives (e.g. Soft BoN, Worst-of-N~\citep{balashankarinfalign} case) as well.
Here, we assume the following about the objective function:
\begin{assumption}
    The reward functional $R$ is concave and $L$-smooth relative to the KL-divergence.
    That is, for all $\pi, \pi' \in \mathcal{P}$, we have
    \begin{align*}
        R[\pi'] & \leq R[\pi] + \langle \d{R}[\pi], \pi' - \pi \rangle,                              \\
        R[\pi'] & \geq R[\pi] + \langle \d{R}[\pi], \pi' - \pi \rangle - L \cdot \KL[\pi' \mid \pi].
    \end{align*}
    for some $L > 0$.
\end{assumption}
Since $R[\pi]$ is concave, the objective $\mathcal{L}[\pi] = -R[\pi] + \beta \KL[\pi \mid \piref]$ is strongly convex
with respect to the KL-divergence if $\beta > 0$.
Therefore, the optimization problem has a unique optimal policy $\pi_* \in \mathcal{P}$.

\subsection{Convergence Guarantees with Exact Proximal Updates}
In this section, we assume that the proximal objective is solved exactly at each iteration.
That is, at step $t$, $\pi_{t+1}$ can be computed as
\begin{align*}
    \pi_{t+1} = \argmin_{\pi} \tilde{\mathcal{L}}[\pi] + \frac{1}{\eta} \KL[\pi \mid \pi_t].
\end{align*}
Then, we have the following convergence guarantee:
\begin{theorem}\label{thm:convergence-exact}
    Let $\{\pi_t\}_{t=0}^{T}$ be the sequence of policies generated by the (exact) non-linear GRPO algorithm with step size $\eta = 1/L$.
    Then, we have
    \begin{align*}
        \mathcal{L}[\pi_T] - \mathcal{L}[\pi_*] & \leq \frac{\beta \cdot \KL[\pi_* \mid \pi_{0}]}{( (L+\beta)/L)^T - 1} .
    \end{align*}
\end{theorem}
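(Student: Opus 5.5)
This is the standard relatively-smooth, relatively-strongly-convex mirror descent argument (in the style of Lu--Freund--Nesterov). The plan is to derive a one-step inequality comparing $\mathcal{L}[\pi_{t+1}]$ with $\mathcal{L}[\pi_*]$ through $\KL[\pi_*\mid\pi_t]$ and $\KL[\pi_*\mid\pi_{t+1}]$, and then unroll it with geometric weights. Here $\tilde{\mathcal{L}}$ is the exact linearization at $\pi_t$, and we write $\KL[\pi\mid\pi']$ for the pointwise-in-$x$ divergence averaged over $\rho$. Since both $\mathcal{L}$ and the surrogate are sums over contexts, it suffices to argue per context.

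\textbf{Step 1 (three-point inequality).} The proximal objective at step $t$ is
\[
\Phi_t[\pi]=-\langle \d{R}[\pi_t],\pi\rangle+\beta\KL[\pi\mid\piref]+L\,\KL[\pi\mid\pi_t],
\]
with $\eta=1/L$. Solve it explicitly: $\pi_{t+1}\propto \piref^{\beta/(L+\beta)}\pi_t^{L/(L+\beta)}\exp(\d{R}[\pi_t]/(L+\beta))$. Because the objective is a linear term plus $(L+\beta)$ times a KL-type entropy, the standard Bregman three-point identity applies. For any $\pi$ it gives
\[
\Phi_t[\pi]\ \ge\ \Phi_t[\pi_{t+1}]+(L+\beta)\KL[\pi\mid\pi_{t+1}].
\]
One can check this directly by substituting the Gibbs form of $\pi_{t+1}$: the difference $\Phi_t[\pi]-\Phi_t[\pi_{t+1}]$ equals exactly $(L+\beta)\KL[\pi\mid\pi_{t+1}]$.

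\textbf{Step 2 (sandwich using the assumption).} Apply relative smoothness with $\pi'=\pi_{t+1}$, $\pi=\pi_t$. This gives
\[
-R[\pi_{t+1}]\le -R[\pi_t]-\langle \d{R}[\pi_t],\pi_{t+1}-\pi_t\rangle+L\KL[\pi_{t+1}\mid\pi_t].
\]
Adding $\beta\KL[\pi_{t+1}\mid\piref]$ to both sides yields $\mathcal{L}[\pi_{t+1}]\le \Phi_t[\pi_{t+1}]+\text{const}_t$, where $\text{const}_t=-R[\pi_t]+\langle \d{R}[\pi_t],\pi_t\rangle$. Next, apply concavity with $\pi'=\pi_*$:
\[
-R[\pi_*]\ge -R[\pi_t]-\langle \d{R}[\pi_t],\pi_*-\pi_t\rangle .
\]
This gives $\Phi_t[\pi_*]+\text{const}_t\le \mathcal{L}[\pi_*]+L\KL[\pi_*\mid\pi_t]$. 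Combining both bounds with Step 1 at $\pi=\pi_*$ produces the key recursion
\[
\mathcal{L}[\pi_{t+1}]-\mathcal{L}[\pi_*]\le L\,\KL[\pi_*\mid\pi_t]-(L+\beta)\KL[\pi_*\mid\pi_{t+1}].
\]

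\textbf{Step 3 (unrolling).} Write $q=(L+\beta)/L$, $D_t=\KL[\pi_*\mid\pi_t]$ and $e_t=\mathcal{L}[\pi_t]-\mathcal{L}[\pi_*]\ge0$. The recursion reads $e_{t+1}\le L D_t-(L+\beta)D_{t+1}$. Multiply it by $q^{t}$ and sum over $t=0,\dots,T-1$. The $D$ terms telescope, since $q^{t}L D_t-q^{t}(L+\beta)D_{t+1}=L(q^tD_t-q^{t+1}D_{t+1})$. The result is
\[
\sum_{t=1}^{T} q^{t-1}e_t\le L D_0 .
\]
Setting $\pi=\pi_t$ in the Step 1/Step 2 chain shows $e_{t+1}\le e_t$, so the iterates are monotone. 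The left side is therefore at least $e_T\sum_{t=0}^{T-1}q^t=e_T\,(q^T-1)/(q-1)=e_T\,(q^T-1)L/\beta$. This yields $e_T\le \beta D_0/(q^T-1)$, which is exactly the claimed bound.

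I expect the main obstacle to be Step 1, namely justifying the exact three-point identity in the space of measures. This includes existence and normalizability of the Gibbs-form minimizer and absolute continuity of $\pi_*$ with respect to the iterates. The identity must be handled with the combined entropy $\beta\KL[\cdot\mid\piref]+L\KL[\cdot\mid\pi_t]$ rather than a pure Bregman term. I would verify it by the explicit substitution above, assuming bounded $\d{R}$ (which holds for BoN since rewards lie in $[0,r_{\max}]$) so that all integrals are finite.
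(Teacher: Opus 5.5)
Your proposal is correct and follows essentially the same route as the paper: relative smoothness, concavity and a three-point inequality give the recursion $\mathcal{L}[\pi_{t+1}]-\mathcal{L}[\pi_*]\le L\,\KL[\pi_*\mid\pi_t]-(L+\beta)\,\KL[\pi_*\mid\pi_{t+1}]$, taking $\pi=\pi_t$ gives monotonicity, and geometric weights make the KL terms telescope. The differences are cosmetic: you verify the three-point step as an exact identity from the Gibbs form of $\pi_{t+1}$ instead of citing the paper's general Bregman three-point lemma, and your weights are shifted by a factor of $(L+\beta)/L$, which does not affect the bound.
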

See Appendix~\ref{sec:proof-convergence-exact} for the proof.
The proof strategy basically follows that of~\citet{frankowski2022mirror} except for some modifications to handle KL-regularization.
This theorem shows that the non-linear GRPO algorithm converges to the optimal policy exponentially fast if $\beta > 0$.

\subsection{Convergence Guarantees with Inexact Proximal Updates}
Next, we consider the practical case, where the first-order variation is approximated using empirical samples
and the proximal objective is solved approximately.
That is, at step $t$, we compute $\pi_{t+1}$ as an approximate solution to
\begin{align*}
    \pi_{t+1} \approx \argmin_{\pi} \hat{\mathcal{L}}[\pi] + \frac{1}{\eta} \KL[\pi \mid \pi_t].
\end{align*}
% where $\hat{\mathcal{L}}[\pi] = - \Expec[y\sim \pi_t]{\frac{\pi}{\pi_t}\pdv{R}{\pi}[\hat \pi_t](y)} + \beta \KL[\pi \mid \piref]$.
Note that $\pdv{R}{\pi}[\hat \pi_t]$ is a random variable depending on the sampled responses $\{y_j\}_{j=1}^M$.
The first-order optimality condition of proximal objective gives
$
    \pdv{\hat{\mathcal{L}}}{\pi}[\pi] + \frac{1}{\eta} \log \frac{\pi}{\pi_t} = \mathrm{const.}
$
In practice, due to the limited representation ability of the model and
inexact optimization, there exists a residual term
$
    r_t := \fdv{\hat{\mathcal{L}}}{\pi}[\pi_{t+1}] + \frac{1}{\eta} \log \frac{\pi_{t+1}}{\pi_t}.
$
Then, we have the following convergence guarantee:
\begin{theorem}\label{thm:convergence-inexact}
    Assume that the approximated first-order variation satisfies
    $
        \Expec{\norm{\pdv{R}{\pi}[\hat \pi_t](y) - \pdv{R}{\pi}[\pi_t](y)}_{\mathrm{sp}}^2} \leq \varepsilon
    $
    for some $\varepsilon \geq 0$,
    and the optimization residual satisfies $\Expec{\norm{r_t}_{\mathrm{sp}}^2} \leq \delta$ for some $\delta \geq 0$ given $\pi_t$.
    Here, $\norm{f}_{\mathrm{sp}} = (\sup_{y} f(y) - \inf_{y} f(y)) / 2$ is the span seminorm.
    Let $\{\pi_t\}_{t=0}^{T}$ be the sequence of policies generated by the (inexact) non-linear GRPO algorithm with step size $\eta = 1/L$
    and a random index $\hat{t}\in\{1,\dots,T\}$ following the distribution: $\Prob{\hat{t}=t} \propto \ab(\frac{L+\beta/2}{L})^t$.
    Then, we have
    \begin{align*}
        \Expec{\mathcal{L}[\pi_{\hat t}] - \mathcal{L}[\pi_*]}
         & \le \frac{(\beta / 2)\cdot \KL[\pi_* \mid \pi_0]}{((L + \beta / 2)/L)^T - 1} + \frac{2(\varepsilon + \delta)}{\beta}.
    \end{align*}
    Here, the expectation is taken over the randomness in the approximation of the functional derivative, optimization procedure, and the choice of $\hat t$.
\end{theorem}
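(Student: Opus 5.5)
We argue as in the proof of \cref{thm:convergence-exact}, a mirror-descent analysis in the space of policies, adding error terms for the inexact gradient and the optimization residual and absorbing them with half of the strong convexity coming from the $\beta\KL[\pi\mid\piref]$ term.

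\textbf{One-step inequality.} Write $e_t := \pdv{R}{\pi}[\hat\pi_t] - \pdv{R}{\pi}[\pi_t]$. By definition of the residual $r_t$, the first-order condition of the proximal step holds up to an additive constant:
\begin{align*}
-\pdv{R}{\pi}[\hat\pi_t] + \beta\log\frac{\pi_{t+1}}{\piref} + L\log\frac{\pi_{t+1}}{\pi_t} = r_t + \mathrm{const.}
\end{align*}
Integrate this identity against $\pi_*-\pi_{t+1}$. The constant vanishes because $\pi_*-\pi_{t+1}$ has zero mass. We then use the three-point identities
\begin{align*}
\langle \log(\pi_{t+1}/\pi_t),\pi_*-\pi_{t+1}\rangle &= \KL[\pi_*\mid\pi_t]-\KL[\pi_*\mid\pi_{t+1}]-\KL[\pi_{t+1}\mid\pi_t],\\
\langle \log(\pi_{t+1}/\piref),\pi_*-\pi_{t+1}\rangle &= \KL[\pi_*\mid\piref]-\KL[\pi_{t+1}\mid\piref]-\KL[\pi_*\mid\pi_{t+1}].
\end{align*}
Combine these with relative smoothness at $(\pi_t,\pi_{t+1})$, $-R[\pi_{t+1}]\le -R[\pi_t]-\langle dR[\pi_t],\pi_{t+1}-\pi_t\rangle+L\KL[\pi_{t+1}\mid\pi_t]$, and with concavity at $(\pi_t,\pi_*)$. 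The $L\KL[\pi_{t+1}\mid\pi_t]$ terms cancel, and we expect
\begin{align*}
\mathcal{L}[\pi_{t+1}]-\mathcal{L}[\pi_*] \le L\KL[\pi_*\mid\pi_t]-(L+\beta)\KL[\pi_*\mid\pi_{t+1}] + \langle e_t + r_t,\ \pi_{t+1}-\pi_*\rangle .
\end{align*}
The sign of $r_t$ depends on convention; either sign is handled identically below.

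\textbf{Controlling the error term.} Since $\pi_{t+1}-\pi_*$ has zero mass, we may subtract the midrange of any function, which gives $|\langle f,\pi_{t+1}-\pi_*\rangle|\le 2\norm{f}_{\mathrm{sp}}\,\mathrm{TV}(\pi_{t+1},\pi_*)$. Pinsker gives $\mathrm{TV}\le\sqrt{\KL[\pi_*\mid\pi_{t+1}]/2}$. Young's inequality with weight $\beta/2$ then yields
\begin{align*}
\langle e_t+r_t,\pi_{t+1}-\pi_*\rangle \le \frac{\beta}{2}\KL[\pi_*\mid\pi_{t+1}] + \frac{1}{\beta}\norm{e_t+r_t}_{\mathrm{sp}}^2 \le \frac{\beta}{2}\KL[\pi_*\mid\pi_{t+1}] + \frac{2}{\beta}\bigl(\norm{e_t}_{\mathrm{sp}}^2+\norm{r_t}_{\mathrm{sp}}^2\bigr).
\end{align*}
Taking conditional expectations and using the assumptions gives
\begin{align*}
\Expec{\mathcal{L}[\pi_{t+1}]-\mathcal{L}[\pi_*]} \le L\,\Expec{\KL[\pi_*\mid\pi_t]} - (L+\beta/2)\,\Expec{\KL[\pi_*\mid\pi_{t+1}]} + \frac{2(\varepsilon+\delta)}{\beta}.
\end{align*}
This explains why the contraction factor is $(L+\beta/2)/L$ rather than $(L+\beta)/L$.

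\textbf{Telescoping.} Multiply the step-$t$ inequality (indexed by $t+1$) by $w_{t+1}=((L+\beta/2)/L)^{t}$ and sum over $t=0,\dots,T-1$. The KL terms telescope, leaving $L\KL[\pi_*\mid\pi_0]$ on the right, plus $\frac{2(\varepsilon+\delta)}{\beta}\sum_t w_t$. Dividing by $\sum_{t=1}^T w_t = \frac{L}{\beta/2}\bigl(((L+\beta/2)/L)^T-1\bigr)$ turns the left side into $\Expec{\mathcal{L}[\pi_{\hat t}]-\mathcal{L}[\pi_*]}$ under the stated distribution of $\hat t$, and gives exactly the claimed bound.

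\textbf{Main obstacle.} The delicate step is the error handling. One must check that the span seminorm pairs correctly with zero-mass differences; this is where the constant freedom of first variations is exploited. One must also keep the Pinsker direction as $\KL[\pi_*\mid\pi_{t+1}]$ so that it is absorbed by the $(L+\beta)$ coefficient. Finally, the conditional expectation must be taken correctly: $e_t$ and $r_t$ depend on the samples drawn at step $t$, so we condition on $\pi_t$ and use the tower property.
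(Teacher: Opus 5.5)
Your proposal is correct and follows essentially the same route as the paper: the same one-step inequality $\mathcal{L}[\pi_{t+1}]\le\mathcal{L}[\pi_*]+L\KL[\pi_*\mid\pi_t]-(L+\beta)\KL[\pi_*\mid\pi_{t+1}]+(\text{error terms})$, the same span-seminorm/Pinsker/Young absorption of half of the $\beta$ strong convexity, and the same geometric-weight telescoping with the same constants. The only cosmetic differences are two. First, you derive the one-step bound by integrating the perturbed first-order condition against $\pi_*-\pi_{t+1}$ with exact KL three-point identities, whereas the paper views $\pi_{t+1}$ as the exact minimizer of a proximal problem with linear term $\hat g_t-r_t$ and invokes its three-point lemma. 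Second, you apply Young once to $e_t+r_t$ and then use $(a+b)^2\le 2a^2+2b^2$, whereas the paper applies Young separately to $e_t$ and $r_t$ with $\alpha=\beta/4$.
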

See Appendix~\ref{sec:proof-convergence-inexact} for the proof.
Even with inexact proximal updates, the non-linear GRPO algorithm converges to a neighborhood of the optimal policy at an exponential rate
but the bias term appears due to the approximation error in the functional derivative.

In general, the required number of samples $M$ to control
the approximation error $\varepsilon$ can be exponentially large in the dimension of the response space $\mathcal{Y}$.
However, in most cases, measure transformation $\mathcal{T}$ depends on the distribution of the reward $r(y)$ rather than the response $y$ itself.
Thus, since $r(y)$ is one-dimensional, we can expect that the approximation error $\varepsilon$ can be controlled with a moderate number of samples.
We provide a dimension-independent bound on the approximation error for BoN-type objectives in the next section.

\subsection{Properties of BoN-type Objectives}~\label{sec:bon-properties}
In this section, we prove the concavity and smoothness of $R[\pi]$ for BoN-type alignment algorithms.
Then, we derive a dimension-independent bound on the approximation error in the functional derivative.
Specifically, we consider the following general form:
\begin{align*}
    \mathcal{T}[\pi](y) = f(C[\pi](r(y))) \cdot \pi(y),
\end{align*}
where $f:\R \to \R$ is monotonically increasing and $L_f$-Lipschitz continuous in $[0, 1]$.
This form includes BoN with $f(z) = N \cdot z^{N-1}$ and
Best of Poisson (BoP)~\citep{khalaf2025inference} (drawing the number of samples $N$ from a Poisson distribution with parameter $\lambda > 0$)
with $f(z) = \lambda \cdot e^{\lambda (z - 1)}$ as special cases.

\paragraph{Concavity and Smoothness}
For BoN-type algorithms, we have the following results:
\begin{lemma}\label{lem:concavity}
    $R[\pi] = \int r(y) T[\pi](y) \d{y}$ is concave in $\pi$.
\end{lemma}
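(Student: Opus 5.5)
Let $F(z)=\int_0^z f(s)\,\d{s}$. The plan is to rewrite $R[\pi]$ as an integral over reward levels, in which $\pi$ enters only through the reward CDF $C[\pi]$, and then read off concavity.

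\textbf{Step 1: reduce to the reward distribution.} Let $\mu_\pi$ be the law of $r(y)$ under $y\sim\pi$, and assume it has a density, as in Proposition~\ref{prop:first-order-variation-bon}. Write $C[\pi](t)=\mu_\pi([0,t])$. Since $\mathcal{T}[\pi](y)=f(C[\pi](r(y)))\,\pi(y)$ depends on $y$ only through $r(y)$, we have
\begin{align*}
R[\pi]=\int_0^{r_{\max}} t\, f(C[\pi](t))\,\d{C[\pi](t)}.
\end{align*}

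\textbf{Step 2: change variables and integrate by parts.} Since $f(C)\,\d{C}=\d{F(C)}$, the integral becomes $\int_0^{r_{\max}} t\,\d{F(C[\pi](t))}$. Integrating by parts with $C[\pi](r_{\max})=1$ and $C[\pi](0^-)=0$ gives
\begin{align*}
R[\pi]=r_{\max}F(1)-\int_0^{r_{\max}} F(C[\pi](t))\,\d{t}.
\end{align*}
This is consistent with Proposition~\ref{prop:first-order-variation-bon}: for BoN, $F(z)=z^N$, and the first variation of the integral term is exactly the formula stated there. So $R$ is a constant minus the integral of $F\circ C[\pi]$.

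\textbf{Step 3: conclude concavity.} For each fixed $t$, the map $\pi\mapsto C[\pi](t)=\int \mathbf 1\{r(y)\le t\}\,\pi(y)\,\d{y}$ is linear in $\pi$. Because $f$ is monotonically increasing, $F$ is convex on $[0,1]$. A convex function composed with a linear map is convex, so $\pi\mapsto F(C[\pi](t))$ is convex for each $t$. Integrating over $t$ preserves convexity, so $\pi\mapsto\int_0^{r_{\max}}F(C[\pi](t))\,\d{t}$ is convex. Hence $R$ is concave along every segment $\pi+\epsilon(\pi'-\pi)$.

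\textbf{Main obstacle.} The delicate step is the change of variables and integration by parts in Step 2 for general $\pi$.

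If the reward distribution has atoms (for instance, $\pi$ is a mixture with atoms), the identity $f(C)\,\d{C}=\d{F(C)}$ fails, because $\mathcal{T}$ would then have to specify how ties are broken. The first thing I would try is to work on the class where $\mu_\pi$ is atomless, as the paper assumes; this class is convex, so concavity along segments suffices. Alternatively, one can take the identity $R[\pi]=r_{\max}F(1)-\int_0^{r_{\max}} F(C[\pi](t))\,\d{t}$ as the definition of $R$, which is the natural tie-breaking convention.

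The boundary terms are finite because $r\in[0,r_{\max}]$ and $F$ is bounded on $[0,1]$, since $f$ is Lipschitz.

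As a sanity check, the concavity inequality $R[\pi']\le R[\pi]+\langle \d{R}[\pi],\pi'-\pi\rangle$ from the Assumption follows directly from the supporting-line inequality $F(b)\ge F(a)+f(a)(b-a)$, applied pointwise in $t$ and then integrated.
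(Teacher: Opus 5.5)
Your proposal is correct and follows the paper's proof: both integrate by parts to obtain $R[\pi]=r_{\max}F(1)-\int_0^{r_{\max}}F(C[\pi](t))\,\mathrm{d}t$, then conclude from the linearity of $\pi\mapsto C[\pi](t)$ and the convexity of $F$, which follows from $f$ being increasing. Your remark that the integration by parts needs an atomless reward distribution, and that this class is closed under mixtures, makes explicit an assumption that the paper's proof uses only implicitly.
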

\begin{lemma}\label{lem:relative-smoothness}
    $R[\pi] = \int r(y) \mathcal{T}[\pi](y) \d{y}$ is $L_f \cdot r_{\max}$-smooth relative to the KL-divergence.
\end{lemma}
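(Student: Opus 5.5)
The plan is to reduce everything to the one-dimensional distribution of the reward, where $R$ becomes an explicit functional of the CDF $F_\pi := C[\pi]$. Then the Bregman gap of $R$ is controlled by a second-order Taylor bound on a convex scalar function, and Pinsker's inequality converts this into a KL bound. Throughout I fix a context $x$. The per-context bound I aim for is linear in $\KL[\pi'(\cdot\mid x)\mid\pi(\cdot\mid x)]$, so taking $\Expec[x\sim\rho]{\cdot}$ of both sides gives the stated inequality with the paper's KL.

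\emph{Step 1 (rewrite $R$).} Let $\Phi(z)=\int_0^z f(s)\d{s}$. Since $f$ is increasing and $L_f$-Lipschitz on $[0,1]$, $\Phi$ is convex with $L_f$-Lipschitz derivative. The density of $\mathcal{T}[\pi]$ pushed to reward space is $f(F_\pi(r))\,\d F_\pi(r)=\d\,\Phi(F_\pi(r))$. This uses the same density assumption on $r(y)$ as Proposition~\ref{prop:first-order-variation-bon}. Integrating by parts on $[0,r_{\max}]$ gives
\begin{align*}
R[\pi] = \int_0^{r_{\max}} r \,\d\,\Phi(F_\pi(r)) = r_{\max}\Phi(1) - \int_0^{r_{\max}} \Phi(F_\pi(r))\d{r}.
\end{align*}
Since $F_\pi(r)=\int \mathbf{1}\{r(y)\le r\}\pi(y)\d{y}$ is linear in $\pi$ and $\Phi$ is convex, this formula also recovers Lemma~\ref{lem:concavity}. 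Differentiating along $\pi+\epsilon(\pi'-\pi)$ gives
\begin{align*}
\d R[\pi](y) = -\int_{r(y)}^{r_{\max}} f(F_\pi(r))\d{r},
\end{align*}
which matches Proposition~\ref{prop:first-order-variation-bon} when $f(z)=Nz^{N-1}$. Consequently $\langle \d R[\pi],\pi'-\pi\rangle=-\int_0^{r_{\max}} f(F_\pi(r))\,(F_{\pi'}(r)-F_\pi(r))\d{r}$.

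\emph{Step 2 (Bregman gap).} Combining Step 1 with these identities,
\begin{align*}
R[\pi']-R[\pi]-\langle \d R[\pi],\pi'-\pi\rangle = -\int_0^{r_{\max}} \bigl[\Phi(F_{\pi'})-\Phi(F_\pi)-\Phi'(F_\pi)(F_{\pi'}-F_\pi)\bigr]\d{r}.
\end{align*}
Because $\Phi'=f$ is $L_f$-Lipschitz, the integrand is at most $\tfrac{L_f}{2}(F_{\pi'}(r)-F_\pi(r))^2$. For every $r$, $|F_{\pi'}(r)-F_\pi(r)|\le \mathrm{TV}(\pi',\pi)$, and Pinsker gives $\mathrm{TV}^2\le \KL[\pi'\mid\pi]/2$. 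Hence the gap is at least $-\tfrac{L_f r_{\max}}{4}\KL[\pi'\mid\pi]\ge -L_f r_{\max}\KL[\pi'\mid\pi]$. This is the required lower bound, and the upper bound is concavity.

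\emph{Main obstacle.} The delicate step is the integration by parts and differentiation under the integral in Step 1 when the reward law under $\pi$ or $\pi'$ may have atoms. The convex combinations $\pi+\epsilon(\pi'-\pi)$ must also stay in the regime where $\mathcal{T}[\pi]=f(C[\pi](r(y)))\pi(y)$ is a valid description. I would first prove the statement under the density assumption of Proposition~\ref{prop:first-order-variation-bon} for both $\pi$ and $\pi'$; that assumption is preserved under mixtures, so the differentiation is justified. If atoms must be allowed, I would use Lebesgue--Stieltjes integration by parts for the right-continuous function $\Phi\circ F_\pi$, taking the paper's definition of $\mathcal{T}$ as the one consistent with that formula.
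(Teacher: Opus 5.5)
Your proof is correct and follows essentially the same route as the paper. Both arguments rewrite $R[\pi]=r_{\max}\Phi(1)-\int_0^{r_{\max}}\Phi(C[\pi](r))\,\mathrm{d}r$ by integration by parts, apply the $L_f$-smoothness of the antiderivative pointwise in $r$, bound $(C[\pi'](r)-C[\pi](r))^2$ by a squared total-variation term, invoke Pinsker, and integrate over $r$. Your versions differ from the paper's in two respects, both improvements. Bounding $|C[\pi'](r)-C[\pi](r)|$ directly by the total variation distance, rather than by $\|\pi'-\pi\|_1$ as the paper does, gives the sharper constant $L_f r_{\max}/4$. Your remark on atoms makes explicit a continuity assumption that the paper uses only implicitly.
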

See Appendix~\ref{sec:proof-concavity} and~\ref{sec:proof-relative-smoothness} for the proof, respectively.
These lemmas show that BoN-type objectives satisfy the assumptions required for the convergence analysis.
Note that the concavity is preserved even if we consider the weighted sum or (smooth) min aggregation of multiple BoN-type objectives.

\paragraph{Approximation Error}
Finally, we analyze the approximation error in the functional derivative for BoN-type objectives.
The following lemma bounds the approximation error in the functional derivative when using empirical samples.
\begin{lemma}\label{lem:approximation-error}
    Let $\{y_j\}_{j=1}^M$ be i.i.d. samples from $\pi_t$ and $\hat \pi_t = \frac{1}{M} \sum_{j=1}^M \delta_{y_j}$ be the empirical distribution.
    Then, the approximation error in the functional derivative can be bounded as
    \begin{align*}
        \Expec{\norm{\pdv{R}{\pi}[\hat \pi_t](y) - \pdv{R}{\pi}[\pi_t](y)}_{\mathrm{sp}}^2} \leq \frac{L_f^2 \cdot r_{\max}^2}{M}.
    \end{align*}
\end{lemma}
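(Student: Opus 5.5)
We first compute the first-order variation for the general BoN-type transform $\mathcal{T}[\pi](y) = f(C[\pi](r(y)))\,\pi(y)$, generalizing Proposition~\ref{prop:first-order-variation-bon}. Write $F(z)=\int_0^z f(s)\,\d{s}$. In terms of the reward distribution, $R[\pi]=\int r\,\d{F(C[\pi](r))}$, and integration by parts gives
\begin{align*}
R[\pi] = r_{\max} F(1) - \int_0^{r_{\max}} F(C[\pi](s))\,\d{s}.
\end{align*}
The map $\pi\mapsto C[\pi](s)=\int \mathbf{1}\{r(y')\le s\}\pi(y')\,\d{y'}$ is linear. Differentiating therefore yields, up to an additive constant,
\begin{align*}
\pdv{R}{\pi}[\pi](y) = -\int_{r(y)}^{r_{\max}} f(C[\pi](s))\,\d{s},
\end{align*}
which for $f(z)=Nz^{N-1}$ recovers Proposition~\ref{prop:first-order-variation-bon}. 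The same formula, with $C[\hat\pi_t]$ the empirical CDF of the rewards, is the plug-in derivative $\pdv{R}{\pi}[\hat\pi_t]$. The density assumption is only needed to make the derivation clean; in the final expression we simply take it as the definition used in the algorithm.

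Next we bound the span seminorm of the difference. Set
\begin{align*}
D(y) = \int_{r(y)}^{r_{\max}} \bigl(f(C[\pi_t](s)) - f(C[\hat\pi_t](s))\bigr)\,\d{s}.
\end{align*}
Since $f$ is $L_f$-Lipschitz on $[0,1]$ and both CDFs take values in $[0,1]$, we have $|D(y)|\le L_f\int_0^{r_{\max}}|C[\pi_t](s)-C[\hat\pi_t](s)|\,\d{s}$ for every $y$. Hence
\begin{align*}
\norm{D}_{\mathrm{sp}} \le \sup_y |D(y)| \le L_f \int_0^{r_{\max}} |\Delta(s)|\,\d{s},
\end{align*}
where $\Delta(s)=C[\hat\pi_t](s)-C[\pi_t](s)$. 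Alternatively, one can use $\sup_y D - \inf_y D \le$ the full integral, and note that $D(y)=0$ when $r(y)=r_{\max}$, which gives the same bound.

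Finally, we square the bound and take expectations. By Cauchy--Schwarz,
\begin{align*}
\bigl(\int_0^{r_{\max}}|\Delta|\bigr)^2 \le r_{\max}\int_0^{r_{\max}}\Delta(s)^2\,\d{s},
\end{align*}
and by Fubini, $\Expec{\Delta(s)^2} = C[\pi_t](s)(1-C[\pi_t](s))/M \le 1/(4M)$, because $M\,C[\hat\pi_t](s)$ is a sum of $M$ i.i.d.\ Bernoulli variables. This gives
\begin{align*}
\Expec{\norm{D}_{\mathrm{sp}}^2} \le \frac{L_f^2 r_{\max}^2}{4M} \le \frac{L_f^2 r_{\max}^2}{M}.
\end{align*}
The bound depends only on the one-dimensional reward distribution, which is why it is dimension-independent.

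The main obstacle is the first step: justifying the variation formula and its integration-by-parts form when $\pi$ is replaced by the atomic $\hat\pi_t$, since ties and atoms break the density assumption. We avoid this by working throughout with the integral representation over reward levels $s$. That representation is well defined for any distribution, so the plug-in derivative is taken to be this formula evaluated at the empirical CDF.
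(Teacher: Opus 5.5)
Your proof is correct, but its probabilistic core differs from the paper's. The first half is shared: the representation $\pdv{R}{\pi}[\pi](y) = -\int_{r(y)}^{r_{\max}} f(C[\pi](s))\,\d{s}$, the bound $\norm{\cdot}_{\mathrm{sp}} \le \norm{\cdot}_\infty$, and the Lipschitz estimate reducing everything to $\Delta = C[\hat\pi_t] - C[\pi_t]$.

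The paper then bounds $\int_0^{r_{\max}} |\Delta(s)|\,\d{s} \le r_{\max} \norm{\Delta}_\infty$. It controls the Kolmogorov--Smirnov statistic $Z = \norm{\Delta}_\infty$ with the DKW inequality $\Prob{Z \ge \varepsilon} \le 2e^{-2M\varepsilon^2}$, and integrates the tail to get $\Expec{Z^2} \le 1/M$.

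You instead keep the $L^1$ distance between the CDFs, which is the Wasserstein-1 distance between the reward distributions. You apply Cauchy--Schwarz and compute $\Expec{\Delta(s)^2} = C[\pi_t](s)(1 - C[\pi_t](s))/M$ exactly via Tonelli. This has three advantages:
\begin{itemize}
\item it needs only a Bernoulli variance computation rather than a uniform concentration theorem;
\item it improves the constant by a factor of $4$;
\item your intermediate bound $L_f^2 r_{\max} M^{-1} \int_0^{r_{\max}} C[\pi_t](s)(1 - C[\pi_t](s))\,\d{s}$ adapts to the spread of the reward distribution under $\pi_t$.
\end{itemize}
In exchange, the DKW route gives a sub-Gaussian tail bound on the error, so a high-probability version of the lemma is immediate. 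A second-moment argument does not give this directly.

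As for the obstacle you flag, the paper likewise just evaluates the formula at $\hat\pi_t$. For BoN, the identity $R[\pi] = r_{\max} - \int_0^{r_{\max}} C[\pi](s)^N\,\d{s}$ is the expected maximum of $N$ i.i.d.\ rewards. It holds for arbitrary reward distributions, including atomic ones, so there the plug-in is exact rather than merely a convention.
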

See Appendix~\ref{sec:proof-approximation-error} for the proof.
This lemma shows that the approximation error decreases at the rate of $O(1/M)$
and is independent of the dimension of the response space $\mathcal{Y}$.
We also demonstrate in the numerical experiments that the non-linear GRPO algorithm works well with a moderate number of samples $M$.

\section{Numerical Experiments}\label{sec:experiments}
\begin{figure*}[t]
    \centering

    % --- (a) 左：2枚まとめて1サブキャプション ---
    \begin{minipage}[t]{0.63\textwidth}
        \centering
        \begin{minipage}[t]{0.49\textwidth}
            \centering
            \includegraphics[
                width=\textwidth,
            ]{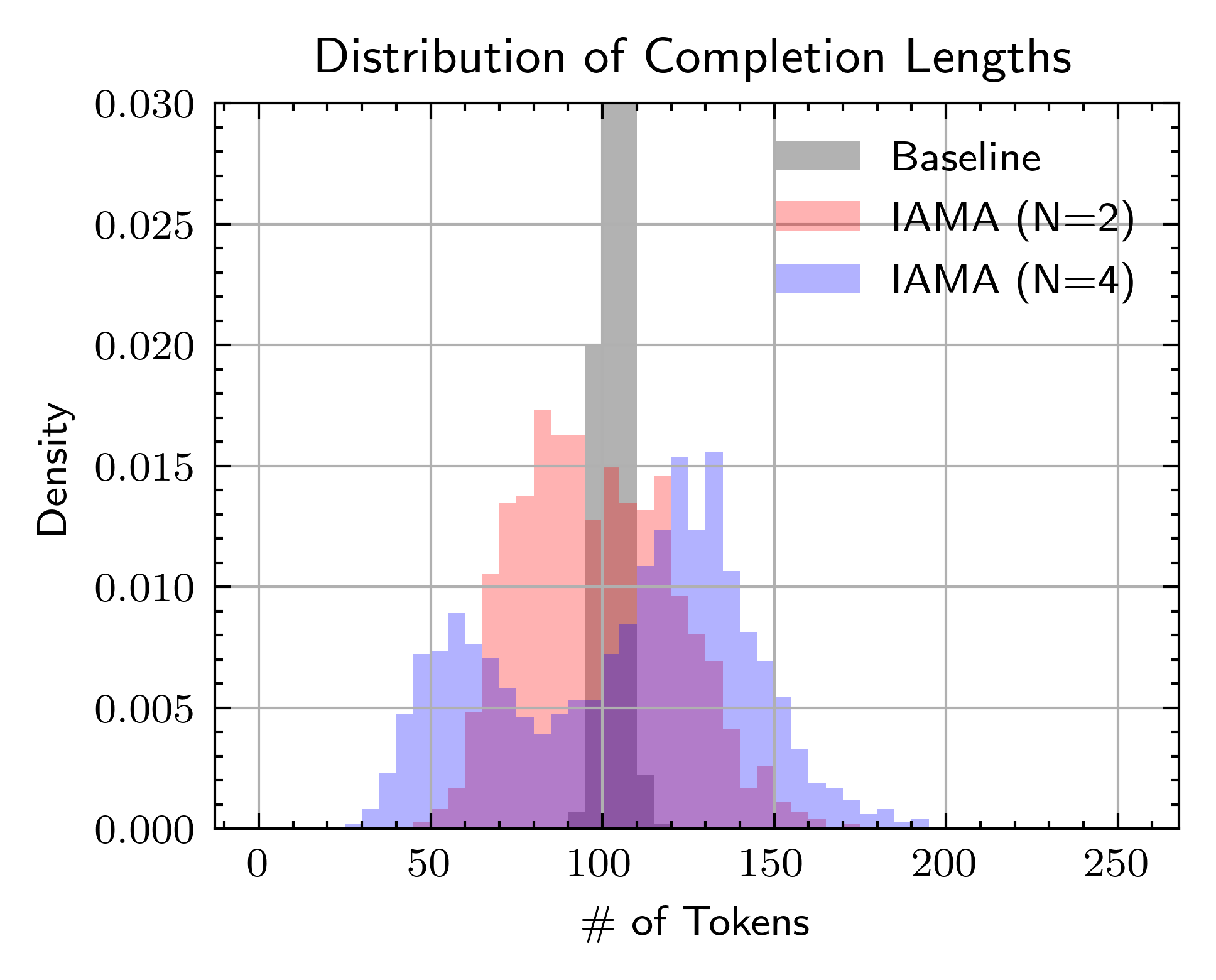}
        \end{minipage}
        \hfill
        \begin{minipage}[t]{0.49\textwidth}
            \centering
            \includegraphics[
                width=\textwidth,
            ]{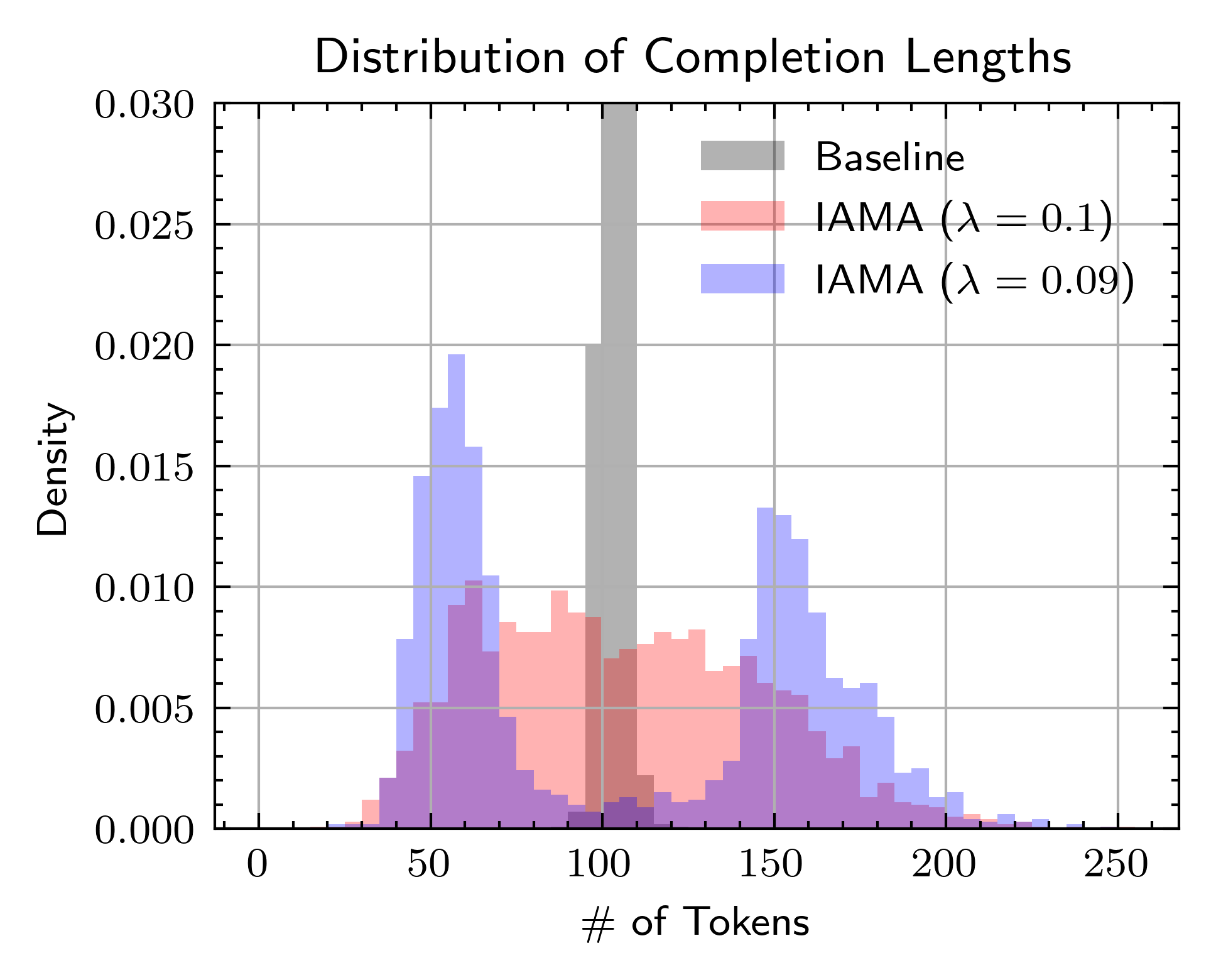}
        \end{minipage}
        \subcaption{
            Response length distributions for different alignment methods on the length reward task.
            The meta-aligned model via non-linear GRPO can effectively adapt to both short and long response preferences via (Soft) BoN sampling,
            while the aligned model via standard GRPO generates only medium-length responses.
        }
        \label{fig:length-reward}
    \end{minipage}
    \hfill
    % --- (b) 右：単独 ---
    \begin{minipage}[t]{0.34\textwidth}
        \centering
        \includegraphics[
            width=\textwidth,
        ]{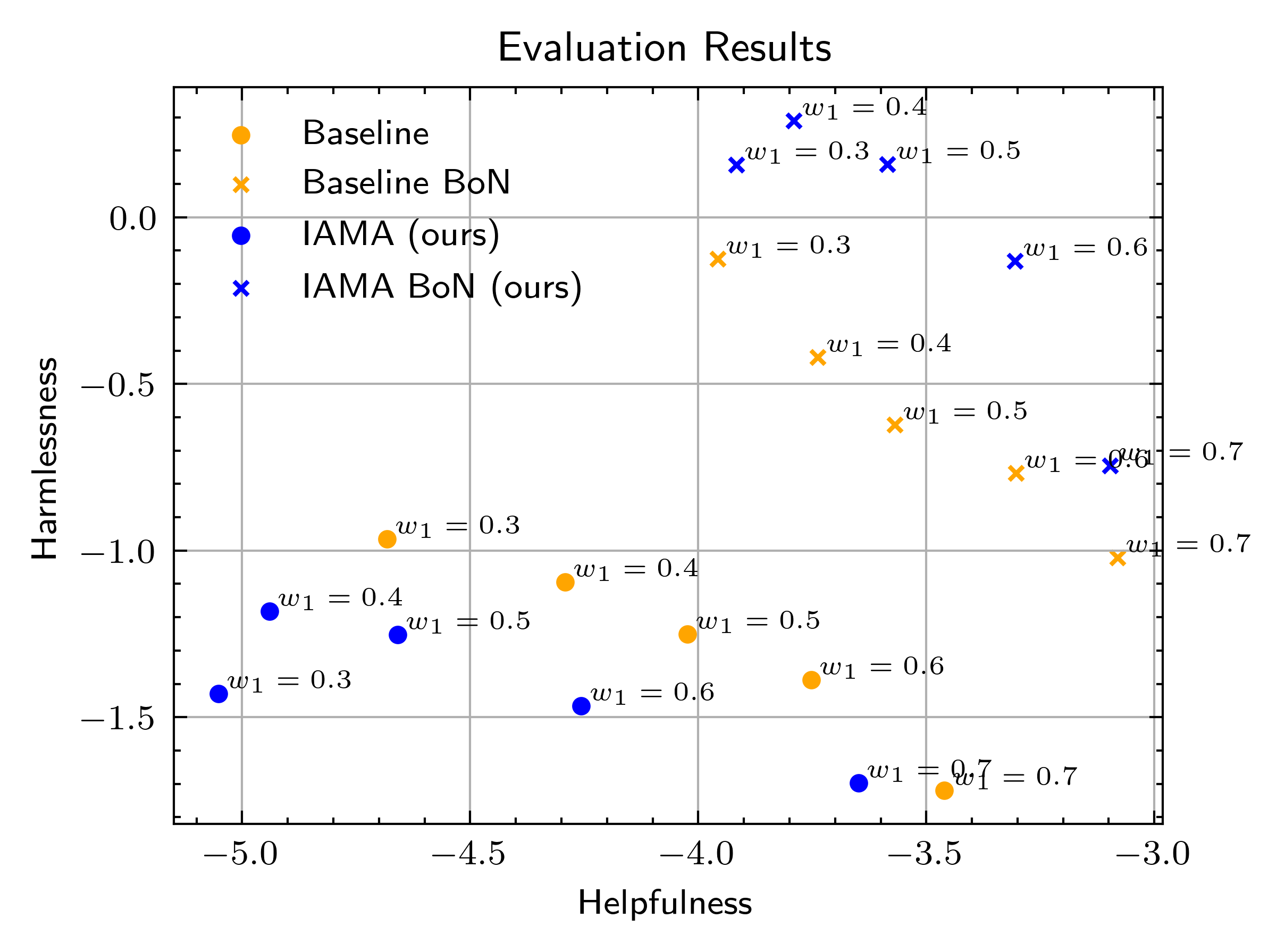}
        \subcaption{
            Averaged rewards on the evaluation set.
            The meta-aligned model via non-linear GRPO can effectively trade off between helpfulness and harmlessness
            via BoN ($N=4$) sampling.
        }
        \label{fig:rlhf-results}
    \end{minipage}
\end{figure*}

In this section, we empirically evaluate the effectiveness of the proposed framework and algorithm
through extensive experiments on multiple models and tasks.
We use TRL~\citep{vonwerra2020trl} library to implement (non-linear) GRPO algorithm.
As discussed in the previous sections, the non-linear GRPO can be implemented by simply replacing the reward function with the approximated functional derivative.
Throughout the experiments, we set the number of sampled responses $M=8$, which is a default value in TRL library.
See Appendix~\ref{sec:experimental-details} for the detailed experimental settings
and Appendix~\ref{sec:additional-experimental-results} for additional experimental results,
including experiments with different hyperparameters and base models, as well as an analysis of the effect of sample size $M$.
We also provide the implementation code in the supplementary material for reproducibility.

\subsection{Length Reward}
We begin with a simple task using length reward to illustrate the difference between aligned policy and meta-aligned policy
and see that the proposed method can obtain IAMA solution effectively.
Specifically, we consider the following two rewards:
\begin{align*}
    r_{1}(y) & = - \ab(\frac{\abs{y - L_1}}{L_{\max}})^2, r_{2}(y) = - \ab(\frac{\abs{y - L_2}}{L_{\max}})^2,
\end{align*}
where $L_1 = 50, L_2 = 150$ are the target lengths and $L_{\max} = 256$ is the maximum length.
These reward functions encourage the model to generate responses with lengths close to $L_1$ and $L_2$, respectively.
This setup corresponds to the situation where some users prefer short responses while others prefer long and detailed ones.
This is a similar setting to the one in Proposition~\ref{prop:bon-optimal-policy}.
We use UltraFeedback dataset~\citep{cui2023ultrafeedback} as the context data
and train Mistral 7B Instruct~\citep{jiang2023mistral7b} model to optimize the objective $\mathcal{R}[\pi] = 0.5 R_1[\pi] + 0.5 R_2[\pi] - \beta \KL[\pi \mid \piref]$
with $\beta=10^{-4}$ using the non-linear GRPO algorithm.
Here, $R_i[\pi] = \Expec[y\sim {\mathcal{T}_i[\pi]}]{r_i(y)}$ with $\mathcal{T}_i$ being BoN or soft BoN sampling.
As a baseline, we also train the base model using standard GRPO to maximize the average expected reward $\Expec[\pi]{(r_1(y) + r_2(y)) / 2}$ - $\beta \KL[\pi \mid \piref]$.

Fig.~\ref{fig:length-reward} (left) shows the response length distributions
generated by the IAMA models for BoN objective ($N=2, 4$)
and Fig.~\ref{fig:length-reward} (right) shows those for soft BoN sampling ($\tau=0.09, 0.1$).
Note that this is the base model distribution, i.e., the distribution without (soft) BoN sampling.
We can see that the meta-aligned model produces diverse responses
that cover both short and long lengths effectively
while the baseline model generates only medium-length responses.
In particular, larger $N$ in BoN and smaller $\tau$ in soft BoN lead to stronger polarization of response lengths.
We also experimented with BoN with $N=8, 16$, which are equal or greater than the number of sampled responses during training ($M=8$)
and confirmed that non-linear GRPO works well even in such cases.
The results are provided in Appendix~\ref{sec:additional-length-reward-experiments} due to space limitations.

\subsection{RLHF with helpfulness and harmlessness}

Next, we evaluate the proposed method on the RLHF task using hh-rlhf dataset~\citep{bai2022training}.
We optimize a reproduced version of Alpaca 7B~\citep{taori2023alpaca,dai2024safe} model to maximize $w_{1} R_{\mathrm{helpful}}[\pi] + (1 - w_{1}) R_{\mathrm{harmless}}[\pi]$,
where $R_{\mathrm{helpful}}[\pi]$ and $R_{\mathrm{harmless}}[\pi]$ are BoN objectives with $N=4$ for helpfulness and harmlessness rewards, respectively.
We vary the weight $w_{1}$ to see how well the model can trade off between helpfulness and harmlessness.
As reward models, we train Qwen3 4B~\citep{yang2025qwen3} based on the Bradley-Terry objective~\citep{bradley1952rank}.
Since evaluation with human raters can be costly, we use the larger reward models based on Qwen 32B as golden rewards to evaluate the aligned models
following previous works~\citep{balashankarinfalign,eisenstein2023helping}.
% We finetune Qwen3 32B model on the same dataset and they achieve 76.2\% and 75.8\% accuracies on evaluation set of helpfulness and harmlessness datasets, respectively.
% Considering the fact that inter-human-rater agreement on these tasks is often less than 75\%~\citep{bai2022training,stiennon2020learning},
% these reward models can be regarded as reasonable proxies of human preferences.

To fairly compare the performance of different methods and weights, we fix the target KL divergence instead of fixing the regularization coefficient $\beta$.
Then, we adopt log-space proportional controller~\citep{ziegler2019fine}
to adjust the KL-regularization coefficient $\beta$ during training.
% \begin{align*}
%     e_t         & = \clip\ab(\frac{\KL[\pi_t \mid \piref] - \KL_{\mathrm{target}}}{\KL_{\mathrm{target}}}, [-0.2, 0.2]), \\
%     \beta_{t+1} & = \beta_t \cdot (1 + 0.1 \cdot e_t)
% \end{align*}
% See Appendix~\ref{sec:experimental-details} for details.
In this experiment, we set the target KL divergence as $0.1$.

Fig.~\ref{fig:rlhf-results} shows the evaluation results on helpfulness and harmlessness rewards
with and without BoN sampling at inference time.
Without BoN sampling, IAMA models cannot outperform standard aligned models
since the standard alignment approach can directly optimize the weighted sum of expected rewards.
On the other hand, with BoN sampling, IAMA models push the Pareto front forward compared to the standard alignment approach.
This is because the meta-aligned model produces diverse responses covering different trade-offs between helpfulness and harmlessness,
which allows the inference-time alignment to select high-reward responses effectively.
We also provide the results for the case of $N=8$ in Appendix~\ref{sec:additional-hh-rlhf-experiments}.
% We also provide scores of the reference model with BoN sampling ($N=4$) in the figure.

\section{Conclusion}
In this work, we proposed inference-aware meta-alignment (IAMA), a novel framework for aligning LLMs to diverse user preferences via two-stage alignment procedures.
We formulated IAMA as a non-linear optimization problem in the space of measures and developed the non-linear GRPO algorithm to solve the problem.
We proved the convexity and smoothness of the BoN-type objective and provided convergence guarantees of the proposed algorithm.
While implementation is simple,
experimental results demonstrated the effectiveness of IAMA via non-linear GRPO in aligning LLMs to multiple reward functions simultaneously.

\section*{Acknowledgements}
TS and RH were partially supported by JSPS KAKENHI (24K02905) and JST CREST (JPMJCR2115).
This research is supported by the National Research Foundation, Singapore, Infocomm Media Development Authority under its Trust Tech Funding Initiative,
and the Ministry of Digital Development and Information under the AI Visiting Professorship Programme (award number AIVP-2024-004).
Any opinions, findings and conclusions or recommendations expressed in this material are those of the author(s) and
do not reflect the views of National Research Foundation, Singapore, Infocomm Media Development Authority, and the Ministry of Digital Development and Information.

\section*{Impact Statement}

This work aims to advance the methodology for aligning large language models to diverse and potentially conflicting human preferences with limited computational resources at inference time.
By improving the efficiency and flexibility of inference-time alignment,
the proposed approach may facilitate broader deployment of aligned language models in practical applications.
At the same time, as with other alignment techniques,
misuse or mis-specification of alignment criteria could lead to unintended behaviors,
underscoring the importance of careful design and evaluation of alignment objectives.
Overall, we do not foresee unique ethical risks beyond those commonly associated with
research on large language model alignment.

\bibliography{main}
\bibliographystyle{icml2026}

%%%%%%%%%%%%%%%%%%%%%%%%%%%%%%%%%%%%%%%%%%%%%%%%%%%%%%%%%%%%%%%%%%%%%%%%%%%%%%%
%%%%%%%%%%%%%%%%%%%%%%%%%%%%%%%%%%%%%%%%%%%%%%%%%%%%%%%%%%%%%%%%%%%%%%%%%%%%%%%
% APPENDIX
%%%%%%%%%%%%%%%%%%%%%%%%%%%%%%%%%%%%%%%%%%%%%%%%%%%%%%%%%%%%%%%%%%%%%%%%%%%%%%%
%%%%%%%%%%%%%%%%%%%%%%%%%%%%%%%%%%%%%%%%%%%%%%%%%%%%%%%%%%%%%%%%%%%%%%%%%%%%%%%
\newpage
\appendix
\onecolumn

\section{Notation}~\label{sec:notation}
For the reader's convenience, we summarize the notation used in this paper in Table~\ref{tab:notation}.
\begin{table}[h]
    \centering
    \caption{Summary of Notation}
    \label{tab:notation}
    \begin{tabular}{c|l}
        \toprule
        Notation                         & Description                                                                                                  \\
        \midrule
        $\mathcal{P}$                    & Set of probability distributions over the response space                                                     \\
        $\pi, \piref$                    & Policy and reference policy in $\mathcal{P}$                                                                 \\
        $r(y)$                           & Reward function mapping response $y$ to a scalar reward                                                      \\
        $\bon_N[\pi]$                    & Best-of-N (BoN) aligned model based on base policy $\pi$ with $N$ samples                                    \\
        $\sbon_\tau[\pi]$                & Soft BoN aligned model based on base policy $\pi$ with temperature parameter $\tau$                          \\
        $\pdv{R}{\pi}[\pi](y)$           & Functional derivative of $R$ at policy $\pi$ evaluated at response $y$                                       \\
        $\d R[\pi]$                      & $\pdv{R}{\pi}[\pi]$ as a linear functional over $\mathcal{P}$                                                \\
        $\langle \d R[\pi], \pi'\rangle$ & Action of linear functional $\d R[\pi]$ on $\pi'$ defined as $\int \pdv{R}{\pi}[\pi](y) \cdot \pi'(y) \d{y}$ \\
        $\KL[\pi \mid \piref]$           & Kullback--Leibler divergence from $\piref$ to $\pi$                                                          \\
        $D_\phi[\pi \mid \mu]$           & Bregman divergence induced by convex functional $\phi$ between $\mu$ and $\pi$                               \\
        $\pi_r$                          & Distribution of reward $r(y)$ when $y \sim \pi$                                                              \\
        \bottomrule
    \end{tabular}
\end{table}

\section{Detailed Discussion on Related Work}~\label{sec:related-work-detailed}
In this section, we provide a more detailed discussion on related work about inference-time alignment and contrast our proposed method with prior studies.
Recently, \citet{balashankarinfalign} have proposed InfAlign-CTRL, which apply non-linear transformation to the reward function.
However, they focus on 1) single reward function, 2) BoN-type alignment algorithms, and 3) win rate maximization objective.
These assumptions are essential for deriving the method because the nonlinear transformation depends on the distribution of rewards,
which can be determined from the fact that, in the case of a single reward and win-rate maximization,
the distribution is uniform.
Note that when there are multiple rewards, even if we consider win-rate maximization,
the reward distribution is no longer uniform due to the presence of correlations.

Several works~\citep{walder2025pass,bagirov2025best} have developped unbiased gradient estimator for Pass@k (BoN) objectives to align LLMs via policy gradient methods.
However, these methods cannot handle inference-time alignment algorithms beyond BoN and
require $M \geq N$ samples in each gradient estimation step.
Therefore, they are not applicable to large $N$ settings or Best of Poisson (BoP), which require (potentially) unbounded number of samples.

On the other hand, our proposed non-linear GRPO algorithm can handle a wide range of inference-time alignment algorithms
and multiple reward functions.
Moreover, even when restricted to the BoN setting,
these prior studies did not investigate the theoretical properties of the optimization problem.
We analyze the loss landscape of the optimization problem and establish its convexity.
Based on this analysis, we provide convergence guarantees.
This constitutes an additional, independent contribution.

\section{Other Non-linear Optimization Problems Related to LLM Alignment}\label{sec:nonlinear-grpo-applications}
Since our proposed non-linear GRPO algorithm is quite versatile, it can be applied to other non-linear optimization problems related to LLM alignment.
In addition, non-linear GRPO can be regarded as a generalization of existing algorithms for specific non-linear optimization problems.
Note that the existing works have not explicitly discussed the connection to non-linear optimization in the space of probability measures,
which enables us to formulate a general algorithm and its theoretical analysis.

\paragraph{Risk-aware Optimization}
Conditional Value at Risk (CVaR) is a widely used risk measure in finance and reinforcement learning,
which is defined as
\begin{align*}
    \mathrm{CVaR}_\alpha[\pi] := \Expec[y \sim \pi]{r(y) \mid r(y) \leq q_\alpha[\pi]},
\end{align*}
where $q_\alpha[\pi]$ is the $\alpha$-quantile of the reward distribution when $y \sim \pi$.
This objective is non-linear in $\pi$ due to the presence of the quantile function $q_\alpha[\pi]$.
CPPO proposed by~\citet{ying2022towards} is a policy optimization algorithm for CVaR objectives,
which can be seen as a special case of our non-linear GRPO algorithm.
\citet{jiang2025risk} also proposed a risk-sensitive RL algorithm for LLMs, using Entropic Risk Measure (ERM) defined as
\begin{align*}
    R_{\mathrm{ERM}}[\pi] := \frac{1}{\beta} \log \Expec[y\sim \pi]{e^{\beta r(y)}}.
\end{align*}
If $\beta > 0$, this objective encourages risk-seeking behavior,
and is concave in $\pi$ since $\log$ is concave and $\Expec[y\sim \pi]{e^{\beta r(y)}}$ is linear in $\pi$.

\paragraph{Diversity-promoting Optimization}
Diversity-promoting optimization aims to learn a policy that generates diverse outputs.
For example, \citet{chen2025enhancing} proposed to use determinantal point processes (DPPs) as a diversity-promoting regulalrizer:
\begin{align*}
    R_{\mathrm{DPP}}[\pi] := \Expec[y_1, \dots, y_N \sim \pi]{\log \det(K(y_1, \dots, y_N))},
\end{align*}
where $[K(y_1, \dots, y_N)]_{i, j} = k(y_i, y_j)$ is a kernel matrix measuring the similarity between outputs $y_1, \dots, y_N$.
Other possible regularizers include Rao's quadratic entropy:
\begin{align*}
    R_{\mathrm{Rao}}[\pi] := -\Expec[y, y' \sim \pi]{k(y, y')}.
\end{align*}
If kernel $k$ is positive semi-definite, the above objective is concave in $\pi$
since this can be regarded as a quadratic form induced by $k$.

\section{Practical Implementation of Non-linear GRPO}~\label{sec:practical-implementation}
Here, we describe a practical implementation of the non-linear GRPO algorithm.
This is almost identical to the standard GRPO algorithm proposed in~\citet{shao2024deepseekmath}
except for the use of the approximated functional derivative $\pdv{R}{\pi}[\hat \pi_t](y)$ as the reward function.
There exist various techniques to improve the stability and efficiency of GRPO that are not listed here.
They can be directly applied to our non-linear GRPO as well.
Please refer to the TRL~\citep{vonwerra2020trl} document for details.
\begin{algorithm}[H]
    \caption{Practical Non-linear GRPO}
    \begin{algorithmic}[1]\label{alg:practical-nonlinear-grpo}
        \STATE \textbf{Input:} Initial policy $\pi_0$, reference policy $\piref$, reward functions $\{r_i\}_{i=1}^m$, weights $\{w_i\}_{i=1}^m$, regularization parameter $\beta$, step size $\eta$, number of iterations $T$
        \FOR{$t = 0, 1, \ldots, T-1$}
        \STATE Sample $M$ responses $\{y_j\}_{j=1}^M$ from $\pi_t$
        \STATE Compute approximated functional derivative: $\tilde {r}_t(y) = \pdv{R}{\pi}[\hat \pi_t](y)$.
        \STATE Compute advantages: $A_j = \frac{\tilde{r}_t(y_j) - b_t}{s_t}$ where $b_t = \frac{1}{M} \sum_{j=1}^M \tilde{r}_t(y_j)$ and $s_t = \sqrt{\frac{1}{M} \sum_{j=1}^M (\tilde{r}_t(y_j) - b_t)^2}$
        \STATE Define surrogate loss: $\hat{\mathcal{L}}[\pi] = \frac{1}{M}\sum_{i=1}^{M} \min\left[\frac{\pi(y_i)}{\pi_t(y_i)} \cdot A_i, \clip_\epsilon\left(\frac{\pi(y_i)}{\pi_t(y_i)}\right) \cdot A_i \right]  + \beta \widehat{\KL}[\pi \mid \piref]$, where $\hat{\KL}[\pi \mid \piref] = \frac{1}{M} \sum_{i=1}^M \frac{\piref(y_i)}{\pi(y_i)} - \log \frac{\piref(y_i)}{\pi(y_i)} - 1$
        \STATE Approximately solve the minimization problem $\min_\pi \hat{\mathcal{L}}[\pi]$ with optimizer (e.g., Adam) to obtain $\pi_{t+1}$
        \ENDFOR
        \STATE \textbf{Return:} Final policy $\pi_T$
    \end{algorithmic}
\end{algorithm}

\subsection{Computation of Functional Derivative}~\label{sec:practical-first-order-variations}
In this section, we describe how to compute the functional derivatives for BoN and Soft BoN objectives with empirical distributions.
For both cases, computational complexity depends on the number of samples $M$ instead of the size $N$ of BoN.

\paragraph{BoN}
If we approximate the functional derivative $\pdv{R}{\pi}[\hat \pi_t](y)$ with empirical distribution $\hat \pi_t$
constructed from $M$ samples $\{y_i\}_{i=1}^M$ drawn from $\pi_t$,
the computational bottleneck is sorting the rewards to compute the empirical quantiles.
Thus, we can compute the functional derivative at these samples in $O(M \log M)$ time as follows:
\begin{algorithm}[H]
    \caption{Linearized Reward Computation for BoN Objective}
    \begin{algorithmic}[1]
        \STATE \textbf{Input:} Reward values $\{r_j\}_{j=1}^M$, BoN parameter $N$

        \STATE Sort indices $\{1,\dots,M\}$ in ascending order of rewards so that
        $r_{(1)} \le r_{(2)} \le \cdots \le r_{(M)}$, where $(i)$ denotes the index
        of the $i$-th smallest reward.

        \STATE Initialize cumulative sum $c \leftarrow 0$
        \STATE Set reference reward $r_{\mathrm{next}} \leftarrow r_{(M)}$
        \STATE Initialize $\tilde r_j \leftarrow 0$ for all $j=1,\dots,M$

        \FOR{$i = M, M-1, \dots, 1$}
        \STATE Compute reward gap $\Delta r \leftarrow r_{\mathrm{next}} - r_{(i)}$
        \STATE Compute empirical CDF $q \leftarrow i/M$
        \STATE Update cumulative contribution
        $c \leftarrow c + N q^{N-1} \Delta r$
        \STATE Assign linearized reward $\tilde r_{(i)} \leftarrow -c$
        \STATE Update reference reward $r_{\mathrm{next}} \leftarrow r_{(i)}$
        \ENDFOR

        \STATE \textbf{Return:} $\{\tilde r_j\}_{j=1}^M$
    \end{algorithmic}
\end{algorithm}

\paragraph{Soft BoN}
If we approximate the functional derivative $\pdv{R}{\pi}[\hat \pi_t](y)$ with empirical distribution $\hat \pi_t$ constructed from $M$ samples $\{y_i\}_{i=1}^M$ drawn from $\pi_t$,
we can compute the functional derivative in $O(M)$ time as follows:
\begin{algorithm}[H]
    \caption{Linearized Reward Computation for Soft BoN Objective}
    \begin{algorithmic}[1]
        \STATE \textbf{Input:} Reward values $\{r_j\}_{j=1}^M$, temperature parameter $\tau$
        \STATE Compute normalization constant
        $Z \leftarrow \frac{1}{M}\sum_{j=1}^M \exp(r_j/\tau)$
        \STATE Compute softmax-weighted expected reward
        $\bar r \leftarrow \frac{1}{M}\sum_{j=1}^M r_j \exp(r_j/\tau) / Z$

        \FOR{$j = 1, \dots, M$}
        \STATE Compute linearized reward
        $\tilde r_j \leftarrow r_j \exp(r_j/\tau) / Z
            - \exp(r_j/\tau)\, \bar r / Z$
        \ENDFOR

        \STATE \textbf{Return:} Linearized rewards $\{\tilde r_j\}_{j=1}^M$
    \end{algorithmic}
\end{algorithm}

\section{Auxiliary Lemmas}
\begin{lemma}
    Let $C[\pi](r)$ be the cumulative distribution function (CDF) defined as $C[\pi](r) = \int \1[r(y) \leq r] \pi(y) \d{y}$.
    Then, the functional derivative of $C[\pi](r)$ is given by $\pdv{C}{\pi}[\pi](r) = \1[r(y) \leq r]$.
\end{lemma}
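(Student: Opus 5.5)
The identity $C[\pi](r)=\int \mathbb{1}[r(y)\le r]\,\pi(y)\,dy$ is linear in $\pi$, so I will verify the definition of the first-order variation directly. Fix $\pi,\pi'\in\mathcal{P}$ and a threshold $r\in\mathbb{R}$, and set $\pi_\epsilon=\pi+\epsilon(\pi'-\pi)$. By linearity of the integral,
\begin{align*}
    C[\pi_\epsilon](r) = C[\pi](r) + \epsilon \int \mathbb{1}[r(y)\le r]\,(\pi'-\pi)(y)\,dy .
\end{align*}
This is affine in $\epsilon$, so its derivative at $\epsilon=0$ exists and equals
\begin{align*}
    \int \mathbb{1}[r(y)\le r]\,(\pi'-\pi)(y)\,dy .
\end{align*}
That is exactly the defining relation for the first-order variation with $\frac{\partial C}{\partial \pi}[\pi](r)$ evaluated at $y$ equal to $\mathbb{1}[r(y)\le r]$. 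Note that this variation is independent of $\pi$.

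The only points needing care are bookkeeping rather than substance:
\begin{itemize}
    \item Here $C[\pi](r)$ is a functional of $\pi$ indexed by the scalar $r$, so "$\frac{\partial C}{\partial \pi}[\pi](r)$" should be read as the function $y\mapsto \mathbb{1}[r(y)\le r]$.
    \item The indicator is bounded and measurable, so the integral against the signed measure $\pi'-\pi$ is finite. No interchange of limit and integral is needed, since the map is exactly affine.
    \item By the paper's remark, the variation is defined only up to an additive constant. The stated representative is the natural choice, and any constant shift integrates to zero against $\pi'-\pi$.
\end{itemize}

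I see no real obstacle. The one thing to state explicitly is which variable the derivative acts on, $y$ rather than the threshold $r$. This matters later, when the lemma is combined via the chain rule with $f(C[\pi](r(y)))$ in Proposition~\ref{prop:first-order-variation-bon}. There the threshold itself is $r(y)$, and the density assumption on the reward distribution is used to handle the $\epsilon$-dependence correctly.
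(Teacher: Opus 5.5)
Your proof is correct and follows the paper's own argument: both expand $C[\pi+\epsilon(\pi'-\pi)](r)$ by linearity of the integral, note that the result is affine in $\epsilon$, and read off the derivative at $\epsilon=0$ as $\int \1[r(y)\le r]\,(\pi'-\pi)(y)\,dy$. Your bookkeeping remarks are accurate: the variation is a function of $y$, and it is defined only up to an additive constant. They are not needed for the argument itself.
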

\begin{proof}
    For any $\pi, \pi' \in \mathcal{P}$ and $\epsilon \in \R$, we have
    \begin{align*}
        C[\pi + \epsilon(\pi' - \pi)](r) - C[\pi](r) & = \int \1[r(y) \leq r] \cdot (\pi + \epsilon(\pi' - \pi))(y) \d{y} - \int \1[r(y) \leq r] \cdot \pi(y) \d{y} \\
                                                     & = \epsilon \int \1[r(y) \leq r] \cdot (\pi' - \pi)(y) \d{y}
    \end{align*}
    and thus
    \begin{align*}
        \odv{C[\pi + \epsilon(\pi' - \pi)](r)}{\epsilon} \Bigg|_{\epsilon = 0} & = \int \1[r(y) \leq r] \cdot (\pi' - \pi)(y) \d{y}.
    \end{align*}
    Hence, we have $\pdv{C}{\pi}[\pi](r) = \1[r(y) \leq r]$.
\end{proof}

\begin{lemma}
    For a reward function $r(y)$ and a base policy $\pi$,
    assume that the distribution of $r(y)$ when $y \sim \pi$ is continuous.
    Then, the BoN policy $\bon_N[\pi]$ satisfies
    \begin{align*}
        \bon_N[\pi](y) = N \cdot C[\pi](r(y))^{N-1} \cdot \pi(y)
    \end{align*}
\end{lemma}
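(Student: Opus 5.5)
We prove the density formula by computing the law of the selected response directly and then differentiating. Draw $y_1,\dots,y_N$ i.i.d. from $\pi$ and output $y^\star=y_{k}$ with $k=\arg\max_j r(y_j)$. Because the law of $r(y)$ under $\pi$ is continuous, reward ties occur with probability zero, so the maximizer is almost surely unique. Ties can therefore be broken arbitrarily without changing the law of $y^\star$.

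We compute the probability that $y^\star$ falls in a measurable set $A\subseteq\mathcal{Y}$. Partition the event according to which index $k$ attains the maximum, and use exchangeability of the samples:
\begin{align*}
    \Pr[y^\star\in A] = \sum_{k=1}^N \Pr\ab[y_k\in A,\ r(y_j)< r(y_k)\ \forall j\neq k] = N\int_A \pi(y)\,\Pr_{y'\sim\pi}\ab[r(y')<r(y)]^{N-1}\d{y}.
\end{align*}
The second equality conditions on $y_k=y$ and uses independence of the remaining $N-1$ samples.

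Next we replace the strict inequality by $C[\pi](r(y))$. Here $C[\pi](r)=\int \1[r(y')\le r]\pi(y')\d{y'}$ is the CDF from the preceding auxiliary lemma. We have $\Pr[r(y')<r(y)]=C[\pi](r(y))-\Pr[r(y')=r(y)]$, and the atom term vanishes because the reward distribution has no atoms. Hence $\Pr[y^\star\in A]=\int_A N\, C[\pi](r(y))^{N-1}\pi(y)\d{y}$ for every measurable $A$. Since this holds for all $A$, the density of $\bon_N[\pi]$ with respect to the same base measure as $\pi$ is $N\, C[\pi](r(y))^{N-1}\pi(y)$.

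As a sanity check, the result integrates to one. Substituting $u=C[\pi](r(y))$, which is uniform on $[0,1]$ under continuity, gives $\int N u^{N-1}\d{u}=1$.

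The main obstacle is measure-theoretic rather than computational: handling the conditioning on $y_k=y$ rigorously, and making sure continuity is used exactly where the strict and non-strict inequalities are exchanged. For a general $\mathcal{Y}$, where $\pi$ may be a density with respect to counting or Lebesgue measure, we would state the argument as the set-function identity above and apply Fubini. This avoids pointwise conditioning on a single value $y$.
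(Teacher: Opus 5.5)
Your proof is correct. You partition by the almost surely unique maximizing index, use exchangeability, and apply independence with Fubini to get $\Pr[y^\star\in A]=N\int_A \Pr_{y'\sim\pi}[r(y')<r(y)]^{N-1}\pi(y)\,dy$. Continuity is used exactly where it is needed: to discard ties and to replace the strict inequality by $C[\pi](r(y))$.

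The paper gives no argument of its own here and simply defers to Lemma 5 of Balashankar et al.\ (InfAlign), so your derivation makes the appendix self-contained. It also shows where the hypothesis is indispensable: without continuity the formula genuinely fails, since for a point mass the right-hand side integrates to $N$.
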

\begin{proof}
    See Lemma 5 of \citet{balashankarinfalign}.
\end{proof}

\begin{lemma}[Three-point inequality]\label{lem:three-point-inequality}
    Let $\mathcal{G}, \psi, \phi$ be convex functionals.
    Then, for any $\mu, \nu$ and $\bar \nu := \arg\min \mathcal{G}[\nu] + \beta D_\psi(\nu \mid \piref) + LD_\phi(\nu \mid \mu) \in \mathcal{P}$, we have
    \begin{align*}
        \mathcal{G}[\bar \nu] + \beta D_\psi(\bar \nu \mid \piref) + LD_\phi(\bar \nu \mid \mu) & \leq \mathcal{G}[\nu] + \beta D_\psi(\nu \mid \piref) + LD_\phi(\nu \mid \mu) \\
                                                                                                & - LD_\phi(\nu \mid \bar \nu) - \beta D_\psi(\nu \mid \bar \nu).
    \end{align*}
\end{lemma}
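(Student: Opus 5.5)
The plan is to derive the inequality from the first-order optimality condition of $\bar\nu$ and the three-point identity for Bregman divergences. Write the objective as $\Phi[\nu] := \mathcal{G}[\nu] + \beta D_\psi(\nu \mid \piref) + L D_\phi(\nu \mid \mu)$. Since $\mathcal{G}$, $\psi$ and $\phi$ are convex and Bregman divergences are convex in their first argument, $\Phi$ is convex on the convex set $\mathcal{P}$. Because $\bar\nu$ minimizes $\Phi$ over $\mathcal{P}$, the directional derivative along $\bar\nu + \epsilon(\nu - \bar\nu)$ at $\epsilon = 0^+$ is nonnegative:
\begin{align*}
\langle \d\mathcal{G}[\bar\nu] + \beta(\d\psi[\bar\nu] - \d\psi[\piref]) + L(\d\phi[\bar\nu] - \d\phi[\mu]), \nu - \bar\nu \rangle \ge 0.
\end{align*}
Here I use $D_\psi(\nu \mid \piref) = \psi[\nu] - \psi[\piref] - \langle \d\psi[\piref], \nu - \piref\rangle$, so $\fdv{}{\nu} D_\psi(\nu \mid \piref) = \d\psi[\nu] - \d\psi[\piref]$, and similarly for $\phi$.

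Next I bound each of the three terms.

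\textbf{Step 1.} For $\mathcal{G}$, convexity gives $\mathcal{G}[\nu] \ge \mathcal{G}[\bar\nu] + \langle \d\mathcal{G}[\bar\nu], \nu - \bar\nu\rangle$.

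\textbf{Step 2.} For each Bregman term, apply the exact three-point identity
\begin{align*}
D_\phi(\nu \mid \mu) = D_\phi(\bar\nu \mid \mu) + D_\phi(\nu \mid \bar\nu) + \langle \d\phi[\bar\nu] - \d\phi[\mu], \nu - \bar\nu\rangle,
\end{align*}
which follows by expanding all three divergences from the definition. The same identity holds with $\psi$ and $\piref$ in place of $\phi$ and $\mu$.

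\textbf{Step 3.} Solve each identity for its inner product and substitute into the optimality condition, together with the inequality from Step 1. Rearranging then gives exactly $\Phi[\bar\nu] \le \Phi[\nu] - L D_\phi(\nu \mid \bar\nu) - \beta D_\psi(\nu \mid \bar\nu)$, which is the claim.

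The only delicate step is the rigorous justification of the optimality condition in infinite-dimensional $\mathcal{P}$, which involves two issues.

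\emph{Differentiability along segments.} One must check that $\epsilon \mapsto \Phi[\bar\nu + \epsilon(\nu - \bar\nu)]$ is differentiable from the right at $0$, with the derivative given by the first variation of the definition. This is the standing assumption in the paper that first variations exist.

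\emph{Infinite divergences.} In the KL case, $D_\phi(\nu \mid \bar\nu)$ may be infinite, or $\log \bar\nu$ may be undefined where $\bar\nu = 0$. If $\Phi[\nu] = \infty$ the claim is trivial. Otherwise, the minimizer $\bar\nu$ has full support relative to $\mu$ and $\piref$, being a Gibbs-type density proportional to $\mu^{L/(L+\beta)}\piref^{\beta/(L+\beta)}e^{-\d\mathcal{G}/(L+\beta)}$. The log terms are therefore finite and the identities are legitimate.

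I would state these regularity conditions as implicit, consistent with the paper's treatment. With them in place, the proof reduces to the algebra above.
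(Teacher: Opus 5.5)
Your proposal is correct and is essentially the paper's proof. The paper takes $f$ to be the full objective and observes that $D_f(\nu \mid \bar\nu) = D_{\mathcal{G}}(\nu \mid \bar\nu) + \beta D_\psi(\nu \mid \bar\nu) + L D_\phi(\nu \mid \bar\nu)$, which is your three-point identity packaged at once since each Bregman term differs from its generating functional only by an affine part; it then combines $D_{\mathcal{G}} \ge 0$ with the first-order optimality of $\bar\nu$, exactly as you do term by term, with your version attending more explicitly to regularity.
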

\begin{proof}
    This is an extension of Lemma 3 in~\citet{frankowski2022mirror}
    and we follow the same proof strategy.
    Let $f = \mathcal{G}[\nu] + \beta D_\psi(\nu \mid \piref) + LD_\phi(\nu \mid \mu)$.
    Note that $f$ is convex since $\mathcal{G}$, $D_\psi$, and $D_\phi$ are convex.
    Then, we have
    \begin{align*}
        D_f(\nu \mid \bar \nu) & = D_\mathcal{G}(\nu \mid \bar \nu) + \beta D_\psi(\nu \mid \bar \nu) + LD_\phi(\nu \mid \bar \nu) \\
                               & \geq \beta D_\psi(\nu \mid \bar \nu) + LD_\phi(\nu \mid \bar \nu)
    \end{align*}
    from the non-negativity of Bregman divergence.

    By the optimality of $\bar \nu$, we have $D_f(\nu \mid \bar \nu) = f(\nu) - f(\bar \nu) - \langle \pdv{f}{\pi}[\bar \nu], \nu - \bar \nu\rangle \leq f(\nu) - f(\bar \nu)$.
    Thus, we have
    \begin{align*}
        f(\nu) - f(\bar \nu) \geq \beta D_\psi(\nu \mid \bar \nu) + LD_\phi(\nu \mid \bar \nu).
    \end{align*}
    Rearranging the above inequality, we obtain the desired result.
\end{proof}
\section{Proof for Propositions~\ref{prop:first-order-variation-bon} and~\ref{prop:first-order-variation-softbon}} \label{sec:proof-first-order-variations}

\paragraph{Case 1: BoN}
BoN defines the aligned model as
\begin{align*}
    \bon_N[\pi](y) = f(C[\pi](r(y))) \cdot \pi(y),
\end{align*}
where $f(r) = N \cdot r^{N-1}$.
Thus, the objective function is given by
\begin{align*}
    R(\pi) & = \Expec[y \sim {\bon_N[\pi]}]{r(y)}                                                  \\
           & = \int r(y) \cdot f(C[\pi](r(y))) \cdot \pi(y) \d{y}                                  \\
           & = \int_0^{r_{\max}} r \cdot f(C[\pi_r](r)) \cdot \pi_r(r) \d{r}                       \\
           & = \ab[r \cdot F(C[\pi_r](r))]_{0}^{r_{\max}} - \int_0^{r_{\max}} F(C[\pi_r](r)) \d{r} \\
\end{align*}
where $\pi_r$ is the distribution of $r(y)$ when $y \sim \pi$ and $F$ is an antiderivative of $f$.
Here, the last inequality follows from integration by parts.
Then, the functional derivative can be computed as
\begin{align*}
    \fdv{R[\pi]}{\pi}(z)
     & = - \int f(C[\pi_r](r)) \fdv{C[\pi_r]}{\pi}(z) \d{r} \\
     & = - \int f(C[\pi_r](r)) \cdot \1_{r \geq r(z)} \d{r} \\
     & = - \int_{r(z)}^{r_{\max}} f(C[\pi_r](r)) \d{r}.
\end{align*}
This completes the proof.

\paragraph{Case 2: Soft BoN}
Let $\lambda = 1/\tau$.
Soft BoN defines the aligned model as
\begin{align*}
    \sbon_\tau[\pi](y) = \frac{\exp(\lambda r(y)) \cdot \pi(y)}{\int \exp(\lambda r(z)) \cdot \pi(z) \d{z}}.
\end{align*}
Thus, the objective function is given by
\begin{align*}
    R(\pi) & = \Expec[y \sim \sbon_\lambda]{r(y)}                                                                       \\
           & = \int r(y) \cdot \frac{\exp(\lambda r(y)) \cdot \pi(y)}{\int \exp(\lambda r(z)) \cdot \pi(z) \d{z}} \d{y} \\
           & = \odv{}{\lambda} \log \left( \int \exp(\lambda r(z)) \cdot \pi(z) \d{z} \right).
\end{align*}
Then, the functional derivative can be computed as
\begin{align*}
    \pdv{R}{\pi}[\pi](y)
     & = \odv{}{\lambda} \ab( \pdv{}{\pi} \log \ab( \int \exp(\lambda r(z)) \cdot \pi(z) \d{z}) )                                                                                                                         \\
     & = \odv{}{\lambda} \ab(\frac{\exp(\lambda r(y))}{\int \exp(\lambda r(z)) \cdot \pi(z) \d{z}} )                                                                                                                      \\
     & = \frac{r(y) \exp(\lambda r(y))}{\int \exp(\lambda r(z)) \cdot \pi(z) \d{z}} - \frac{\exp(\lambda r(y)) \cdot \int r(z) \exp(\lambda r(z)) \cdot \pi(z) \d{z}}{\ab(\int \exp(\lambda r(z)) \cdot \pi(z) \d{z})^2}.
\end{align*}
This completes the proof.

\section{Proof for Section~\ref{sec:iama}}
\subsection{Proof of Proposition~\ref{prop:bon-optimal-policy}}~\label{sec:proof-bon-optimal-policy}

Let $Y_1, \dots, Y_N \iid \pi$ be $N$ i.i.d. samples from $\pi$.
Since $r_1$ is strictly decreasing and $r_2$ is strictly increasing, the samples maximizing $r_1$ and $r_2$ are the minimum and maximum order statistics,
\begin{align*}
    Y_{(1)} & := \min_{k\leq N} Y_k, \\
    Y_{(N)} & := \max_{k\leq N} Y_k.
\end{align*}
Hence, the objective is
\begin{align*}
    R[\pi] := \mathbb{E}[-Y_{(1)}^2] + \mathbb{E}[-(1-Y_{(N)})^2].
\end{align*}

For any $y\in[0,1]$,
\begin{align*}
    \Prob{Y_{(1)}>y}     & = (1-C[\pi](y))^N, \\
    \Prob{Y_{(N)}\leq y} & = C[\pi](y)^N .
\end{align*}
Using $\mathbb{E}[X^2]=\int_0^1 2t\,\mathbb{P}(X>t)\,dt$ for $X\in[0,1]$ and a change of variables,
\begin{align*}
    R[\pi] = -2\int_0^1 \ab[ y(1-C[\pi](y))^N + (1-y)C[\pi](y)^N ]\,dy .
\end{align*}
Thus maximizing $R[\pi]$ is equivalent to minimizing
\begin{align*}
    I(\pi) := \int_0^1 \phi_y(C[\pi](y))\,dy,
\end{align*}
where $\phi_y(p) := y(1-p)^N + (1-y)p^N$.
For fixed $y\in(0,1)$, we have
\begin{align*}
    \phi_y''(p)
    = N(N-1)\bigl[y(1-p)^{N-2} + (1-y)p^{N-2}\bigr] > 0
\end{align*}
for $p \in (0, 1)$.
Therefore, $\phi_y$ is strictly convex on $(0, 1)$ and has a unique minimizer
satisfying
\begin{align*}
    0 & = \phi_y'(p)
    = -Ny(1-p)^{N-1} + N(1-y)p^{N-1} \\
      & \Longleftrightarrow
    \left(\frac{p}{1-p}\right)^{N-1} = \frac{y}{1-y}.
\end{align*}
That is, the minimizer is given by
\begin{align*}
    C^*(y)
    = \frac{y^{\alpha}}{y^{\alpha} + (1-y)^{\alpha}},
    \qquad
    \alpha := \frac{1}{N-1}.
\end{align*}
Moreover, $C^*(y)$ is strictly increasing with $C^*(0)=0$ and $C^*(1)=1$,
which implies that $\pi_* := \frac{dC^*}{dy}$ is in $\mathcal{P}$
and a minimizer of $I(\pi)$. This completes the proof.

\section{Proofs for Section~\ref{sec:convergence_analysis}}
\subsection{Proof of Theorem~\ref{thm:convergence-exact}}~\label{sec:proof-convergence-exact}
For notational simplicity, we define
\begin{align*}
    F[\pi]           & := -R[\pi],                            \\
    \mathcal{L}[\pi] & = F[\pi] + \beta \KL[\pi \mid \piref].
\end{align*}
Note that $F$ is convex and $L$-smooth since $R$ is concave and $L$-smooth.

From the $L$-smoothness of $F[\pi]$, we have
\begin{align*}
    \mathcal{L}[\pi_{t + 1}] - \mathcal{L}[\pi_t] \leq \langle \d F[\pi_{t}], \pi_{t+1} - \pi_t\rangle + L \KL[\pi_{t+1} \mid \pi_t] + \beta \KL[\pi_{t+1} \mid \piref] - \beta \KL[\pi_t \mid \piref].
\end{align*}
Applying Lemma~\ref{lem:three-point-inequality} to the convex functional
$\langle \d F[\pi_t], \pi - \pi_t\rangle$, we obtain
\begin{align*}
    \langle \d F[\pi_t], \pi_{t+1} - \pi_t\rangle + L\KL[\pi_{t + 1} \mid \pi_t] + \beta \KL[\pi_{t + 1} \mid \piref] & \leq \langle \d F[\pi_t], \pi - \pi_t\rangle + L \KL[\pi \mid \pi_t] + \beta \KL[\pi \mid \piref] \\
                                                                                                                      & - (L + \beta) \KL[\pi \mid \pi_{t + 1}]
\end{align*}
for any $\pi \in \mathcal{P}$.
Combining the above two inequalities, the following inequality holds:
\begin{align*}
    \mathcal{L}[\pi_{t + 1}] - \mathcal{L}[\pi_t] & \leq  \langle \d F[\pi_t], \pi - \pi_t\rangle \\
                                                  & + L \KL[\pi \mid \pi_t]                       \\
                                                  & + \beta \KL[\pi \mid \piref]                  \\
                                                  & - (L + \beta) \KL[\pi \mid \pi_{t + 1}]       \\
                                                  & - \beta \KL[\pi_t \mid \piref].
\end{align*}
By setting $\pi = \pi_t$, we have
\begin{align*}
    \mathcal{L}[\pi_{t + 1}] - \mathcal{L}[\pi_t] & \leq - (L + \beta) \KL[\pi_t \mid \pi_{t + 1}] \leq 0,
\end{align*}
which implies the monotonicity of $\mathcal{L}[\pi_t]$.

On the other hand, by setting $\pi = \pi_*$, we have
\begin{align*}
    \mathcal{L}[\pi_{t + 1}] - \mathcal{L}[\pi_t] & \leq  \langle \d F[\pi_t], \pi_* - \pi_t\rangle \\
                                                  & + L \KL[\pi_* \mid \pi_t]                       \\
                                                  & + \beta \KL[\pi_* \mid \piref]                  \\
                                                  & - (L + \beta) \KL[\pi_* \mid \pi_{t + 1}]       \\
                                                  & - \beta \KL[\pi_t \mid \piref].
\end{align*}
Since $F$ is convex, we have
\begin{align*}
    \langle \d F[\pi_t], \pi_* - \pi_t\rangle \leq F(\pi_*) - F(\pi_t).
\end{align*}
Thus, we obtain
\begin{align*}
    \mathcal{L}[\pi_{t + 1}] - \mathcal{L}[\pi_t] & \leq  F(\pi_*) - F(\pi_t)                  \\
                                                  & + L \KL[\pi_* \mid \pi_t]                  \\
                                                  & + \beta \KL[\pi_* \mid \piref]             \\
                                                  & - (L + \beta) \KL[\pi_* \mid \pi_{t + 1}]  \\
                                                  & - \beta \KL[\pi_t \mid \piref]             \\
                                                  & = \mathcal{L}[\pi_*] - \mathcal{L}[\pi_t]  \\
                                                  & + L \KL[\pi_* \mid \pi_t]                  \\
                                                  & - (L + \beta) \KL[\pi_* \mid \pi_{t + 1}].
\end{align*}
Rearranging the above inequality, we have
\begin{align*}
    \mathcal{L}[\pi_{t + 1}] & \leq \mathcal{L}[\pi_*] + L \KL[\pi_* \mid \pi_t] - (L + \beta) \KL[\pi_* \mid \pi_{t + 1}].
\end{align*}
Summing the above inequality from $t = 0$ to $T - 1$, we obtain
\begin{align*}
    \sum_{t=1}^{T} \ab(\frac{L+\beta}{L})^t \mathcal{L}[\pi_{t}]
     & \leq \sum_{t=1}^{T} \ab(\frac{L+\beta}{L})^t \mathcal{L}[\pi_*]
    + L \sum_{t=1}^{T} \ab(\frac{L+\beta}{L})^t \KL[\pi_* \mid \pi_{t-1}]
    - (L + \beta) \sum_{t=1}^{T} \ab(\frac{L+\beta}{L})^t \KL[\pi_* \mid \pi_{t}] \\
     & = \sum_{t=1}^{T} \ab(\frac{L+\beta}{L})^t \mathcal{L}[\pi_*]
    + (L + \beta) \ab[\sum_{t=1}^{T} \ab(\frac{L+\beta}{L})^{t-1} \KL[\pi_* \mid \pi_{t-1}]
    - \sum_{t=1}^{T} \ab(\frac{L+\beta}{L})^t \KL[\pi_* \mid \pi_{t}]]            \\
     & = \sum_{t=1}^{T} \ab(\frac{L+\beta}{L})^t \mathcal{L}[\pi_*]
    + (L + \beta) \ab[\sum_{t=0}^{T-1} \ab(\frac{L+\beta}{L})^{t} \KL[\pi_* \mid \pi_{t}]
    - \sum_{t=1}^{T} \ab(\frac{L+\beta}{L})^t \KL[\pi_* \mid \pi_{t}]]            \\
     & = \sum_{t=1}^{T} \ab(\frac{L+\beta}{L})^t \mathcal{L}[\pi_*]
    + (L + \beta) \ab[\KL[\pi_* \mid \pi_{0}]
    - \ab(\frac{L+\beta}{L})^T \KL[\pi_* \mid \pi_{T}]]                           \\
     & \leq \sum_{t=1}^{T} \ab(\frac{L+\beta}{L})^t \mathcal{L}[\pi_*]
    + (L + \beta) \KL[\pi_* \mid \pi_{0}].
\end{align*}
Dividing both sides by $\sum_{t=1}^{T} \ab(\frac{L+\beta}{L})^t$, we have
\begin{align*}
    \mathcal{L}[\pi_{T}] - \mathcal{L}[\pi_*]
     & \leq \frac{\sum_{t=1}^{T} \ab(\frac{L+\beta}{L})^t \mathcal{L}[\pi_{t}]}{\sum_{t=1}^{T} \ab(\frac{L+\beta}{L})^t} - \mathcal{L}[\pi_*] \\
     & \leq \frac{(L + \beta) \KL[\pi_* \mid \pi_{0}]}{\sum_{t=1}^{T} \ab(\frac{L+\beta}{L})^t}                                               \\
     & = \frac{(L + \beta) \KL[\pi_* \mid \pi_{0}]}{\frac{(L+\beta)}{L} \cdot \frac{( (L+\beta)/L)^T - 1}{(L+\beta)/L - 1}}                   \\
     & = \frac{\beta \cdot \KL[\pi_* \mid \pi_{0}]}{( (L+\beta)/L)^T - 1} .
\end{align*}
The first inequality follows from the monotonicity of $\mathcal{L}[\pi_t]$.
This completes the proof.

\subsection{Proof of Theorem~\ref{thm:convergence-inexact}}~\label{sec:proof-convergence-inexact}
Let the empirical gradient be $\hat{g}_t := \d F[\hat{\pi}_t]$ and define the estimation error $e_t := \hat{g}_t - \d F[\pi_t]$.
From the $L$-smoothness of $F$, we have
\begin{align*}
    \mathcal{L}[\pi_{t+1}] - \mathcal{L}[\pi_t]
     & \le \d F[\pi_t](\pi_{t+1} - \pi_t)
    + L\,\KL[\pi_{t+1} \mid \pi_t]
    + \beta\,\KL[\pi_{t+1} \mid \piref]
    - \beta\,\KL[\pi_t \mid \piref]       \\
     & = \hat{g}_t(\pi_{t+1} - \pi_t)
    + L\,\KL[\pi_{t+1} \mid \pi_t]
    + \beta\,\KL[\pi_{t+1} \mid \piref]
    - \beta\,\KL[\pi_t \mid \piref]
    - e_t(\pi_{t+1} - \pi_t).
\end{align*}

Due to the inexact optimization, there exists a residual term $r_t$ defined as
\begin{align*}
    r_t := \hat{g}_t + \frac{1}{\eta} \log \frac{\pi_{t+1}}{\pi_t} + \beta \log \frac{\pi_{t+1}}{\piref}.
\end{align*}
Note that $r_t$ is constant if the optimization is exact.
This implies that $\pi_{t+1}$ is the optimal solution to the following optimization problem:
\begin{align*}
    \min_{\pi \in \mathcal{P}} \langle \hat{g}_t - r_t, \pi - \pi_t \rangle + \frac{1}{\eta} \KL[\pi \mid \pi_t] + \beta \KL[\pi \mid \piref].
\end{align*}
Applying Lemma~\ref{lem:three-point-inequality} to the convex functional
$\langle \hat{g}_t - r_t, \pi - \pi_t\rangle$, we obtain
\begin{align*}
     & \langle \hat{g}_t - r_t, \pi_{t+1} - \pi_t \rangle + L \KL[\pi_{t+1} \mid \pi_t] + \beta \KL[\pi_{t+1} \mid \piref]                                                                     \\
     & \quad = \langle \hat{g}_t, \pi_{t+1} - \pi_t \rangle + L \KL[\pi_{t+1} \mid \pi_t] + \beta \KL[\pi_{t+1} \mid \piref] - \langle r_t, \pi_{t+1} - \pi_t \rangle                          \\
     & \quad \leq \langle \hat{g}_t - r_t, \pi_* - \pi_t \rangle + L \KL[\pi_* \mid \pi_t] + \beta \KL[\pi_* \mid \piref] - (L + \beta) \KL[\pi_* \mid \pi_{t+1}]                              \\
     & \quad = \langle \hat{g}_t, \pi_* - \pi_t \rangle + L \KL[\pi_* \mid \pi_t] + \beta \KL[\pi_* \mid \piref] - (L + \beta) \KL[\pi_* \mid \pi_{t+1}] - \langle r_t, \pi_* - \pi_t \rangle.
\end{align*}

Combining the above two inequalities, we have
\begin{align}
    \mathcal{L}[\pi_{t+1}] - \mathcal{L}[\pi_t]
     & \leq \hat{g}_t(\pi_* - \pi_t)
    + L\KL[\pi_* \mid \pi_t]
    + \beta\KL[\pi_* \mid \piref]            \notag \\
     & \quad - (L+\beta)\KL[\pi_* \mid \pi_{t+1}]
    - \beta\KL[\pi_t \mid \piref]
    - \langle e_t, \pi_{t+1} - \pi_t \rangle
    - \langle r_t, \pi_* - \pi_{t+1} \rangle. \label{eq:empirical_step1}
\end{align}

By convexity of $F$,
\begin{align*}
    \d F[\pi_t](\pi_* - \pi_t) \leq F[\pi_*] - F[\pi_t],
\end{align*}
and thus
\begin{align*}
    \hat{g}_t(\pi_* - \pi_t)
    = \d F[\pi_t](\pi_* - \pi_t) + \langle e_t, \pi_* - \pi_t \rangle
    \leq F[\pi_*] - F[\pi_t] + \langle e_t, \pi_* - \pi_t \rangle.
\end{align*}
Substituting this into~\eqref{eq:empirical_step1}, we get
\begin{align*}
    \mathcal{L}[\pi_{t+1}] - \mathcal{L}[\pi_t]
     & \leq F[\pi_*] - F[\pi_t] + L\KL[\pi_* \mid \pi_t]
    + \beta\KL[\pi_* \mid \piref]                        \\
     & \quad - (L+\beta)\KL[\pi_* \mid \pi_{t+1}]
    - \beta\KL[\pi_t \mid \piref]
    + \langle e_t, \pi_* - \pi_{t+1} \rangle
    - \langle r_t, \pi_* - \pi_{t+1} \rangle.
\end{align*}
Rearranging the above inequality, we have
\begin{align}
    \mathcal{L}[\pi_{t+1}]
     & \leq \mathcal{L}[\pi_*]
    + L\KL[\pi_* \mid \pi_t]
    - (L+\beta)\KL[\pi_* \mid \pi_{t+1}] + \langle e_t, \pi_* - \pi_{t+1} \rangle - \langle r_t, \pi_* - \pi_{t+1} \rangle. \label{eq:empirical_main_step}
\end{align}
Let $c = (\sup_y e_t(y) + \inf_y e_t(y)) / 2$.
By Pinsker's inequality and Hölder's inequality,
\begin{align*}
    \langle e_t, \pi_* - \pi_{t+1}\rangle & = \langle e_t - c, \pi_* - \pi_{t+1}\rangle                              \\
                                          & \leq \|e_t - c\|_\infty \|\pi_* - \pi_{t+1}\|_1                          \\
                                          & = \norm{e_t}_{\mathrm{sp}} \cdot  \|\pi_* - \pi_{t+1}\|_1                \\
                                          & \leq \norm{e_t}_{\mathrm{sp}} \cdot \sqrt{2\,\KL[\pi_* \mid \pi_{t+1}]}.
\end{align*}
The second equality follows from the fact $\norm{e_t - c}_{\infty} = \max(\sup_y e_t(y) - c, c - \inf_y e_t(y)) = (\sup_y e_t(y) - \inf_y e_t(y)) / 2 = \norm{e_t}_{\mathrm{sp}}$.
Applying Young's inequality ($ab \le \tfrac{a^2}{2\alpha} + \tfrac{\alpha b^2}{2}$), we obtain
\begin{align*}
    \langle e_t, \pi_* - \pi_{t+1}\rangle
    \leq \frac{\|e_t\|_{\mathrm{sp}}^2}{2\alpha} + \alpha \KL[\pi_* \mid \pi_{t+1}]
\end{align*}
for any $\alpha > 0$.

Similarly, we have
\begin{align*}
    -\langle r_t, \pi_* - \pi_{t+1}\rangle
    \leq \frac{\|r_t\|_{\mathrm{sp}}^2}{2\alpha} + \alpha \KL[\pi_* \mid \pi_{t+1}]
\end{align*}
for any $\alpha > 0$.

Substituting these into~\eqref{eq:empirical_main_step}, we obtain
\begin{align}
    \mathcal{L}[\pi_{t+1}]
    \leq \mathcal{L}[\pi_*]
    + L\,\KL[\pi_* \mid \pi_t]
    - (L+\beta - 2\alpha)\,\KL[\pi_* \mid \pi_{t+1}]
    + \frac{\|e_t\|_{\mathrm{sp}}^2 + \|r_t\|_{\mathrm{sp}}^2}{2\alpha}. \label{eq:empirical_telescope}
\end{align}

Setting $\alpha = \beta/4$ (so that $\beta - 2\alpha = \beta/2$), we have
\begin{align*}
    \mathcal{L}[\pi_{t+1}]
    \leq \mathcal{L}[\pi_*]
    + L \KL[\pi_* \mid \pi_t]
    - (L+\beta/2)\,\KL[\pi_* \mid \pi_{t+1}]
    + \frac{2(\|e_t\|_{\mathrm{sp}}^2 + \|r_t\|_{\mathrm{sp}}^2)}{\beta}.
\end{align*}

Let $\mu := L + \beta/2$. Summing over $t=1,\dots,T$ with weights $(\mu/L)^t$ yields
\begin{align*}
    \sum_{t=1}^{T} \left(\frac{\mu}{L}\right)^t \mathcal{L}[\pi_t]
     & \leq \sum_{t=1}^{T} \left(\frac{\mu}{L}\right)^t \mathcal{L}[\pi_*]
    + \mu \left[\KL[\pi_* \mid \pi_0]
        - \left(\frac{\mu}{L}\right)^T \KL[\pi_* \mid \pi_T]\right]
    + \sum_{t=1}^{T} \left(\frac{\mu}{L}\right)^t \frac{2(\|e_{t-1}\|_{\mathrm{sp}}^2 + \|r_{t-1}\|_{\mathrm{sp}}^2)}{\beta}.
\end{align*}

Let $S_T := \sum_{t=1}^{T} (\mu/L)^t = \frac{\mu}{L}\cdot\frac{(\mu/L)^T - 1}{(\mu/L) - 1}$. Then
\begin{align*}
    \frac{\sum_{t=1}^{T} \left(\frac{\mu}{L}\right)^t \mathcal{L}[\pi_t]}{S_T} - \mathcal{L}[\pi_*]
     & = \sum_{t=1}^{T} \Prob{\hat t = t} \mathcal{L}[\pi_t] - \mathcal{L}[\pi_*] \\
     & \leq \frac{\mu\,\KL[\pi_* \mid \pi_0]}{S_T}
    + \frac{2}{\beta} \cdot \frac{\sum_{t=1}^{T} (\mu/L)^t (\|e_{t-1}\|_{\mathrm{sp}}^2 + \|r_{t-1}\|_{\mathrm{sp}}^2)}{S_T}.
\end{align*}

Substituting $\mu = L + \beta/2$ gives
\begin{align*}
    \frac{\mu\,\KL[\pi_* \mid \pi_0]}{S_T}
     & = \frac{(L + \beta/2)\,\KL[\pi_* \mid \pi_0]}{\frac{(L + \beta/2)}{L} \cdot \frac{((L + \beta/2)/L)^T - 1}{((L + \beta/2)/L) - 1}} \\
     & = \frac{(\beta/2)\cdot \KL[\pi_* \mid \pi_0]}{((L + \beta/2)/L)^T - 1}.
\end{align*}

From the assumptions $\mathbb{E}[\|e_t\|_{\mathrm{sp}}^2] \leq \varepsilon$ and $\mathbb{E}[\|r_t\|_{\mathrm{sp}}^2] \leq \delta$, we have
\begin{align*}
    \Expec{\frac{\sum_{t=1}^{T} (\mu/L)^t (\|e_{t-1}\|_{\mathrm{sp}}^2 + \|r_{t-1}\|_{\mathrm{sp}}^2)}{S_T}}
     & \le \frac{\sum_{t=1}^{T} (\mu/L)^t (\varepsilon + \delta)}{S_T}
    = \varepsilon + \delta.
\end{align*}
Thus, we have
\begin{align*}
    \Expec{\mathcal{L}[\pi_{\hat t}] - \mathcal{L}[\pi_*]}
     & \leq \frac{(\beta/2)\cdot \KL[\pi_* \mid \pi_0]}{((L + \beta/2)/L)^T - 1} + \frac{2(\varepsilon + \delta)}{\beta}.
\end{align*}

\subsection{Proof of Lemma~\ref{lem:concavity}}~\label{sec:proof-concavity}
\begin{proof}
    Let $F$ be the antiderivative of $f$.
    Since $f$ is non-decreasing, $F$ is convex.
    By integration by parts, we have
    \begin{align*}
        R[\pi]
         & = \int r(y) f(C[\pi](r(y))) \pi(y) \d{y}                                           \\
         & = \int_0^\infty r f(C[\pi](r)) \pi_r(r) \d{r}                                      \\
         & = \int_0^\infty r \cdot \odv{}{r} F(C[\pi](r)) \d{r}                               \\
         & = \left[ r F[\pi](r) \right]_{0}^{r_{\max}} - \int_0^{r_{\max}} F(C[\pi](r)) \d{r} \\
         & = r_{\max}F(1) - \int_0^{r_{\max}} F(C[\pi](r)) \d{r}.
    \end{align*}
    Thus, it is suffice to show that $P[\pi] := \int_0^{r_{\max}} F(C[\pi](r)) \d{r}$ is convex in $\pi$.
    For any $\pi, \pi' \in \mathcal{P}$ and $\lambda \in [0,1]$, we have
    \begin{align*}
        P[(\lambda \pi + (1 - \lambda)\pi')]
         & = \int_0^{r_{\max}} F(C[(\lambda \pi + (1 - \lambda)\pi')](r)) \d{r}                           \\
         & = \int_0^{r_{\max}} F\left( \lambda C[\pi](r) + (1 - \lambda) C[\pi'](r) \right) \d{r}         \\
         & \leq \int_0^{r_{\max}} \left( \lambda F(C[\pi](r)) + (1 - \lambda) F(C[\pi'](r)) \right) \d{r} \\
         & = \lambda P[\pi] + (1 - \lambda) P[\pi'].
    \end{align*}
    Here, the inequality follows from the convexity of $F$.
    This completes the proof.
\end{proof}

\subsection{Proof of Lemma~\ref{lem:relative-smoothness}}~\label{sec:proof-relative-smoothness}
\begin{proof}
    From the integration by parts, we have
    \begin{align*}
        R(\pi) & = \int_0^{r_{\max}} r(y) f(C[\pi](r)) \d{\pi}(y)                                       \\
               & = \left[ r F(C[\pi](r)) \right]_{0}^{r_{\max}} - \int_0^{r_{\max}} F(C[\pi](r)) \d{r}, \\
               & = r_{\max} F(1) - \int_0^{r_{\max}} F(C[\pi](r)) \d{r},
    \end{align*}
    where $F$ is the antiderivative of $f$.
    Since the first term is constant, it suffices to show that $\int_0^{r_{\max}} F(C[\pi](r)) \d{r}$ is $L_f \cdot r_{\max}$-smooth relative to the KL-divergence.

    First, we prove $F(C[\pi](r))$ is $L_f$-smooth relative to the KL-divergence for all $r$.
    For any $\pi, \pi'$, we have
    \begin{align*}
         & F(C[\pi'](r)) - F(C[\pi](r)) - \int f(C[\pi](r)) \cdot \1_{r(y) \leq r}\cdot ( \pi'(y) - \pi(y) ) \d{y} \\
         & = F(C[\pi'](r)) - F(C[\pi](r)) - f(C[\pi](r)) \cdot ( C[\pi'](r) - C[\pi](r) )                          \\
         & \leq \frac{L_f}{2} ( C[\pi'](r) - C[\pi](r) )^2                                                         \\
         & = \frac{L_f}{2} \left( \int ( \pi'(y) - \pi(y) ) \cdot  \1_{r(y) \leq r} \d{y} \right)^2                \\
         & \leq \frac{L_f}{2} \left( \int \abs{\pi'(y) - \pi(y) } \cdot  \1_{r(y) \leq r} \d{y} \right)^2          \\
         & \leq \frac{L_f}{2} \left( \int \abs{\pi'(y) - \pi(y) } \d{y} \right)^2                                  \\
         & \leq 2 L_f d_{\mathrm{TV}}(\pi, \pi')^2                                                                 \\
         & \leq L_f \KL[\pi' \mid \pi].
    \end{align*}
    Here, the first inequality follows from the $L_f$-smoothness of $F$,
    and the last inequality follows from Pinsker's inequality.
    Thus, $F(C[\pi](r))$ is $L_f$-smooth relative to the KL-divergence for all $r$.

    Next, by integrating both sides over $r \in [0, r_{\max}]$, we have
    \begin{align*}
         & \int_0^{r_{\max}} F(C[\pi'](r)) \d{r} - \int_0^{r_{\max}} F(C[\pi](r)) \d{r} - \int_0^{r_{\max}} \int f(C[\pi](r)) \cdot \1_{r(y) \leq r}\cdot ( \pi'(y) - \pi(y) ) \d{y} \d{r} \\
         & \leq L_f \cdot r_{\max} \cdot \KL[\pi' \mid \pi].
    \end{align*}
    Thus, $\int_0^{r_{\max}} F(C[\pi](r)) \d{r}$ is $L_f \cdot r_{\max}$-smooth relative to the KL-divergence.
\end{proof}

\subsection{Proof of Lemma~\ref{lem:approximation-error}}~\label{sec:proof-approximation-error}
Since the span seminorm satisfies $\norm{f}_{\mathrm{sp}} \leq \norm{f}_\infty$, it suffices to bound the infinity norm.
The first-order variation can be computed as
\begin{align*}
    \pdv{R}{\pi}[\pi](y) & = -\int_{r(y)}^{r_{\max}} f(C[\pi](r)) \d{r}.
\end{align*}
Thus, the approximation error can be bounded as follows:
\begin{align*}
    \abs{\pdv{R}{\pi}[\hat \pi_t](y) - \pdv{R}{\pi}[\pi_t](y)}
     & \leq \int_0^{r_{\max}} \abs{f(C[\hat \pi_t](r)) - f(C[\pi_t](r))} \d{r} \\
     & \leq L_f r_{\max} \norm{C[\hat \pi_t] - C[\pi_t]}_\infty
\end{align*}
Let $Z = \norm{C[\hat \pi_t] - C[\pi_t]}_\infty$ be a random variable.
Using Dvoretzky-Kiefer-Wolfowitz (DKW) inequality, we have
\begin{align*}
    \Prob{ Z \geq \varepsilon } \leq 2 e^{-2 M \varepsilon^2},
\end{align*}
which implies
\begin{align*}
    \Expec{Z^2} & = 2 \int_0^1 \varepsilon \Prob{ Z \geq \varepsilon } \d{\varepsilon} \leq 2 \int_0^1 \varepsilon \cdot 2 e^{-2 M \varepsilon^2} \d{\varepsilon} \\
                & \leq \frac{1}{M}.
\end{align*}
Thus, we have
\begin{align*}
    \Expec{\norm{\pdv{R}{\pi}[\hat \pi_t](y) - \pdv{R}{\pi}[\pi_t](y)}_{\infty}^2} \leq \frac{L_f^2 r_{\max}^2}{M},
\end{align*}
which completes the proof.

\section{Experimental Details}~\label{sec:experimental-details}
Here, we provide additional details regarding our experiments.
Hyperparameters used in our experiments are summarized in Table~\ref{tab:hyperparameters}.
\paragraph{Compute Resources}
Our experiments were conducted on Intel(R) Xeon(R) Silver 4316 CPU @ 2.30GHz and 8 NVIDIA A100-SXM4-80GB GPUs.

\paragraph{Length Reward Experiments}
We set the number of responses $M=8$, learning rate $1e-6$, batch size $48$, number of iterations $T=3000$, and $\beta = 1e-4$.
We use constant learning rate schedule and turn off reward scaling in GRPO to stabilize training.
Other hyperparameters are default values from the TRL library~\citep{vonwerra2020trl}.

\paragraph{HH-RLHF Experiments}
We set the number of responses $M=8$, learning rate $1e-6$, batch size $48$, number of iterations $T=1000$, and the initial $\beta = 1e-2$.
We adopt log-space proportional controller~\citep{ziegler2019fine}
to adjust the KL-regularization coefficient $\beta$ during training as follows:
\begin{align*}
    e_t         & = \clip\ab(\frac{\KL[\pi_t \mid \piref] - \KL_{\mathrm{target}}}{\KL_{\mathrm{target}}}, [-0.2, 0.2]), \\
    \beta_{t+1} & = \beta_t \cdot (1 + 0.1 \cdot e_t)
\end{align*}
with target KL divergence $\KL_{\mathrm{target}} = 0.1$.
To stabilize training, we turn off reward scaling in GRPO.
Other hyperparameters are default values from the TRL library~\citep{vonwerra2020trl}.
For evaluation, we use the first 1000 samples from the validation set of the HH-RLHF dataset
to reduce the computational cost.

\begin{table}[h]
    \centering
    \caption{Hyperparameters used in our experiments.}
    \label{tab:hyperparameters}
    \begin{tabular}{lcc}
        \toprule
        Hyperparameter                    & Length Reward Experiments & HH-RLHF Experiments \\
        \midrule
        Number of responses $M$           & 8                         & 8                   \\
        Learning rate                     & $10^{-6}$                 & $10^{-6}$           \\
        Per device batch size             & 8                         & 8                   \\
        Number of iterations $T$          & 3000                      & 1000                \\
        Initial KL coefficient $\beta$    & $10^{-4}$                 & $10^{-2}$           \\
        Target KL $\KL_{\mathrm{target}}$ & N/A                       & $10^{-1}$           \\
        Learning rate scheduler           & Constant                  & Linear              \\
        bf16 Training                     & True                      & True                \\
        Optimizer                         & AdamW                     & AdamW               \\
        Clipping $\epsilon$               & $0.2  $                   & $0.2$               \\
        \bottomrule
    \end{tabular}
\end{table}

\section{Additional Experimental Results}~\label{sec:additional-experimental-results}
% \subsection{Length Reward Experiments}
\subsection{Effect of the Number of Responses $M$}
To investigate the effect of the number of responses $M$ in the estimation of linearized loss, we conduct additional synthetic experiments.
Since it is difficult to obtain true expected reward with inference-time alignment in general, we consider tractable setup where the reward distribution is a beta distribution on $[0, 1]$
and the inference-time alignment is performed by BoN with $N$ responses.
Let $\pi_{\alpha}(y)$ be a beta distribution with shape parameters $(\alpha, 1)$ on $[0, 1]$.
Then, $\pi_{1}(y)$ is equivalent to the uniform distribution on $[0, 1]$.
As is well known, $\bon_N[\pi_{\alpha}](y)$ is a beta distribution $\pi_{\alpha N}(y)$
and its expected reward is given by $R_{\mathrm{bon}}[\pi_{1}] = \frac{\alpha N}{\alpha N + 1}$.
Therefore, we can analytically compute the true difference of expected reward as $R_{\mathrm{bon}}[\pi_{\alpha}] - R_{\mathrm{bon}}[\pi_{1}] = \frac{N\alpha }{N\alpha + 1} - \frac{N}{N + 1}$.
We expect that the estimated linearized loss $\langle \d{R[\hat \pi]}, \pi_{\alpha} - \pi_{1} \rangle$ approximates this true difference well when $M$ is sufficiently large and $\alpha \simeq 1$.
Specifically, we construct $\hat \pi$ by drawing $M = 2, \dots, 32$ samples from $\pi_{1}$
and estimate the linearized loss for $\alpha = 1 + 10^{-4}$.
Then, the mean squared error between the estimated linearized loss and the true difference of expected reward is computed over 100,000 trials.
To reduce the variance, linearized reward is centered by subtracting the mean as in GRPO algorithm.
Figure~\ref{fig:linearized-loss-mistral} shows the comparison results for different $M$ and $N$ values.
We can observe that large $N$ requires larger $M$ to achieve low expected error but it still achieves reasonable accuracy even with $M < N$
compared to $N=1$ case, which corresponds to the standard GRPO.
In particular, the bias term rapidly decreases as $M$ increases and nearly vanishes for $M \geq N$.

\begin{figure}[t]
    \centering
    \includegraphics[width=0.24\textwidth]{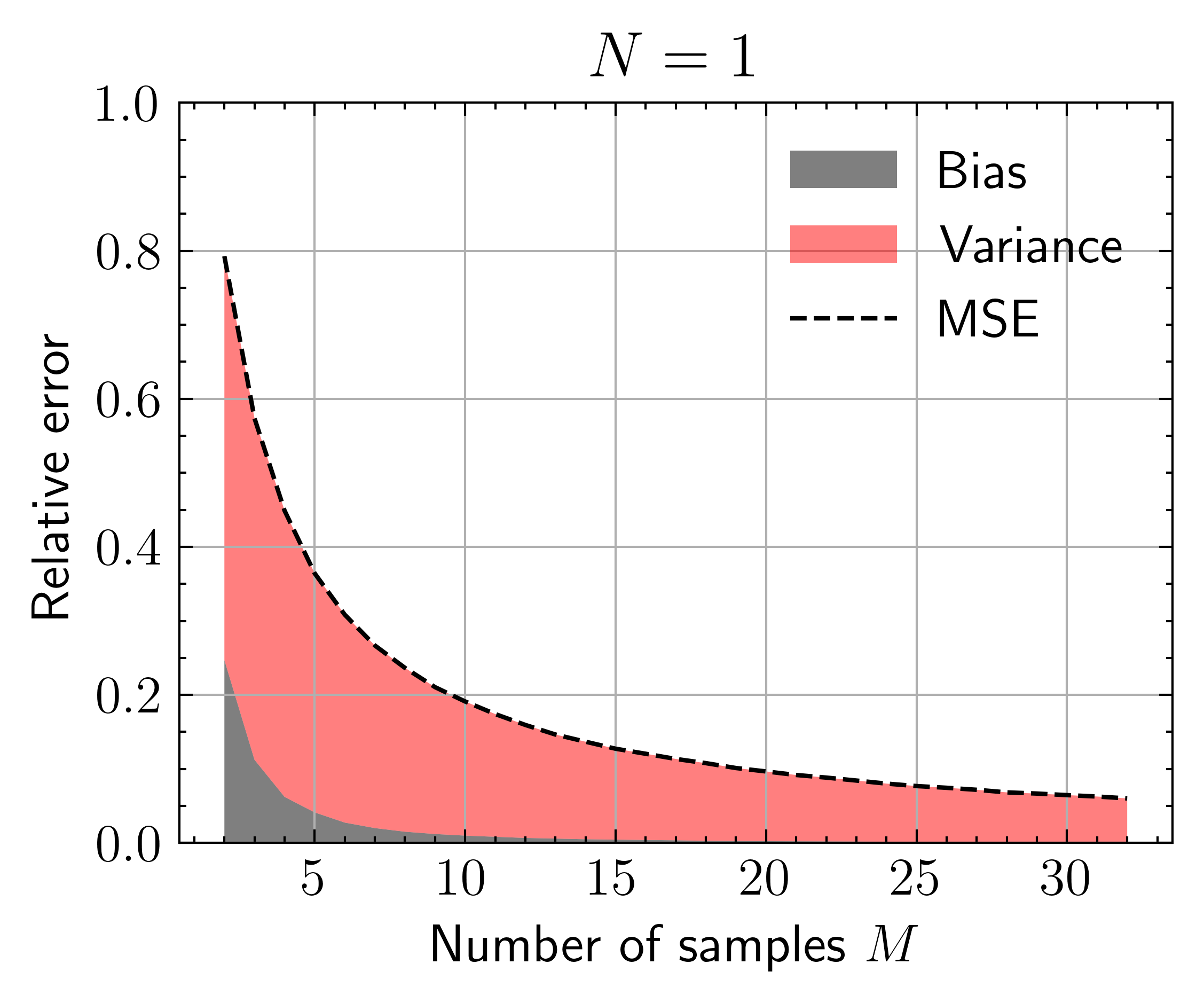}
    \includegraphics[width=0.24\textwidth]{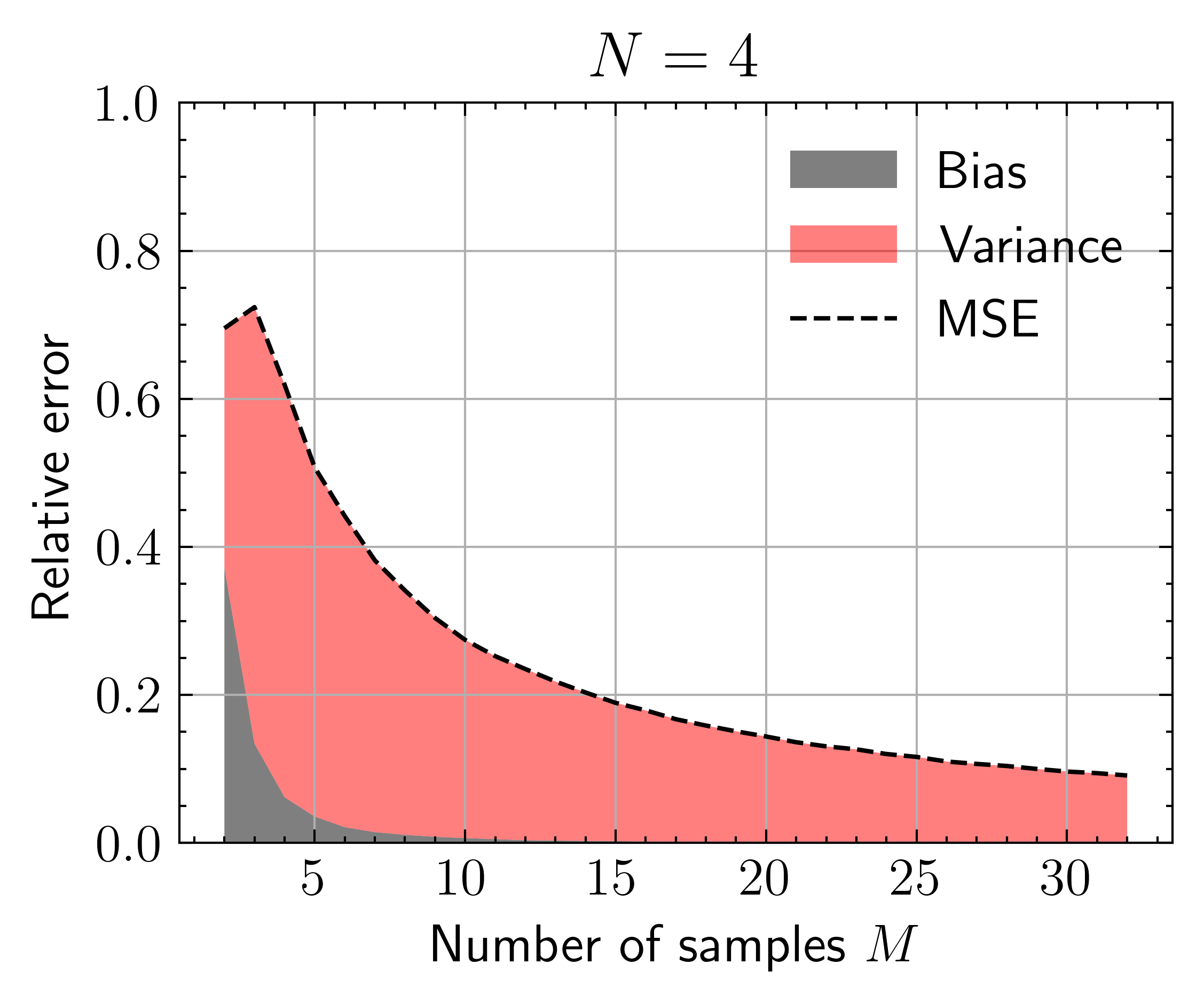}
    \includegraphics[width=0.24\textwidth]{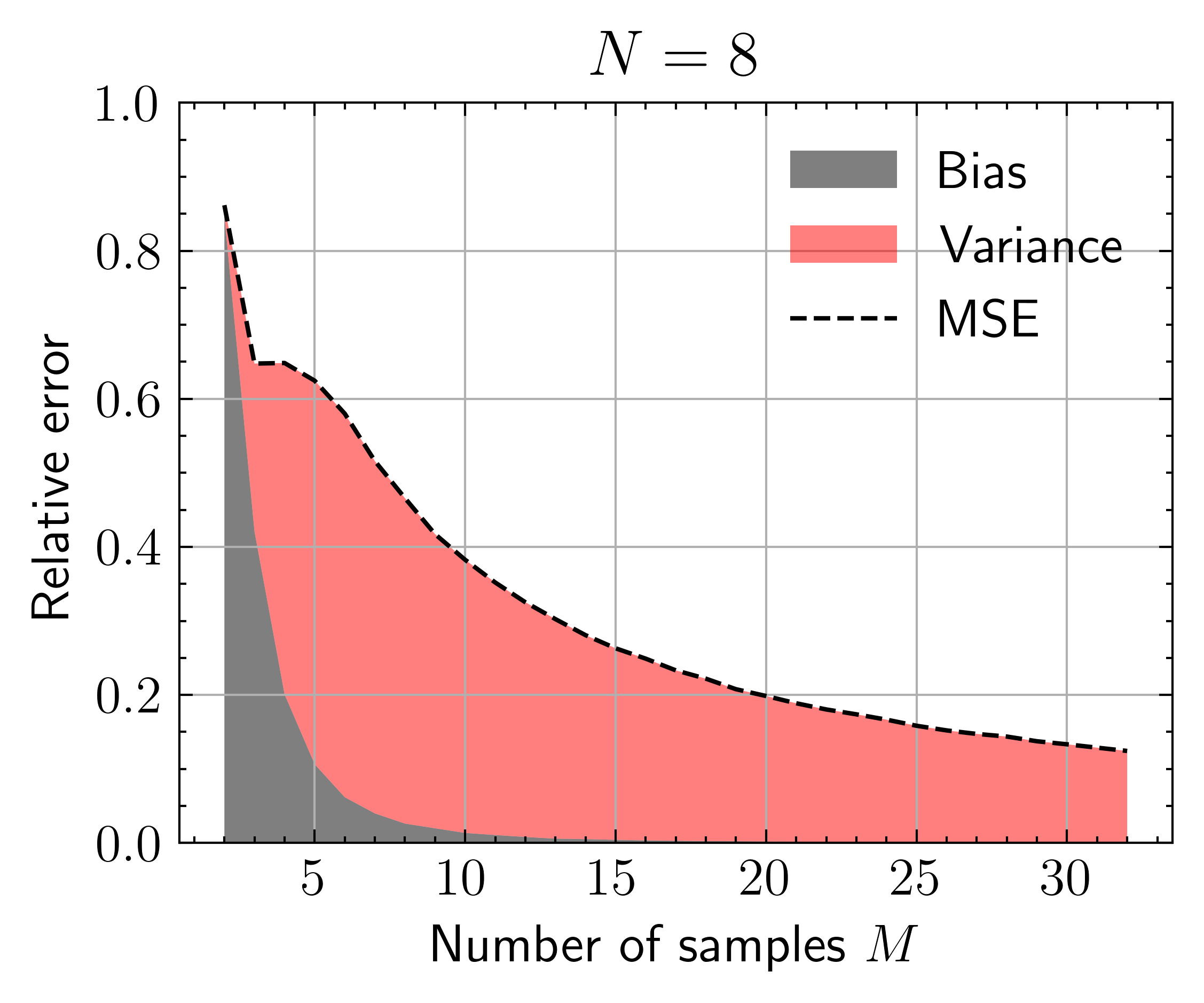}
    \includegraphics[width=0.24\textwidth]{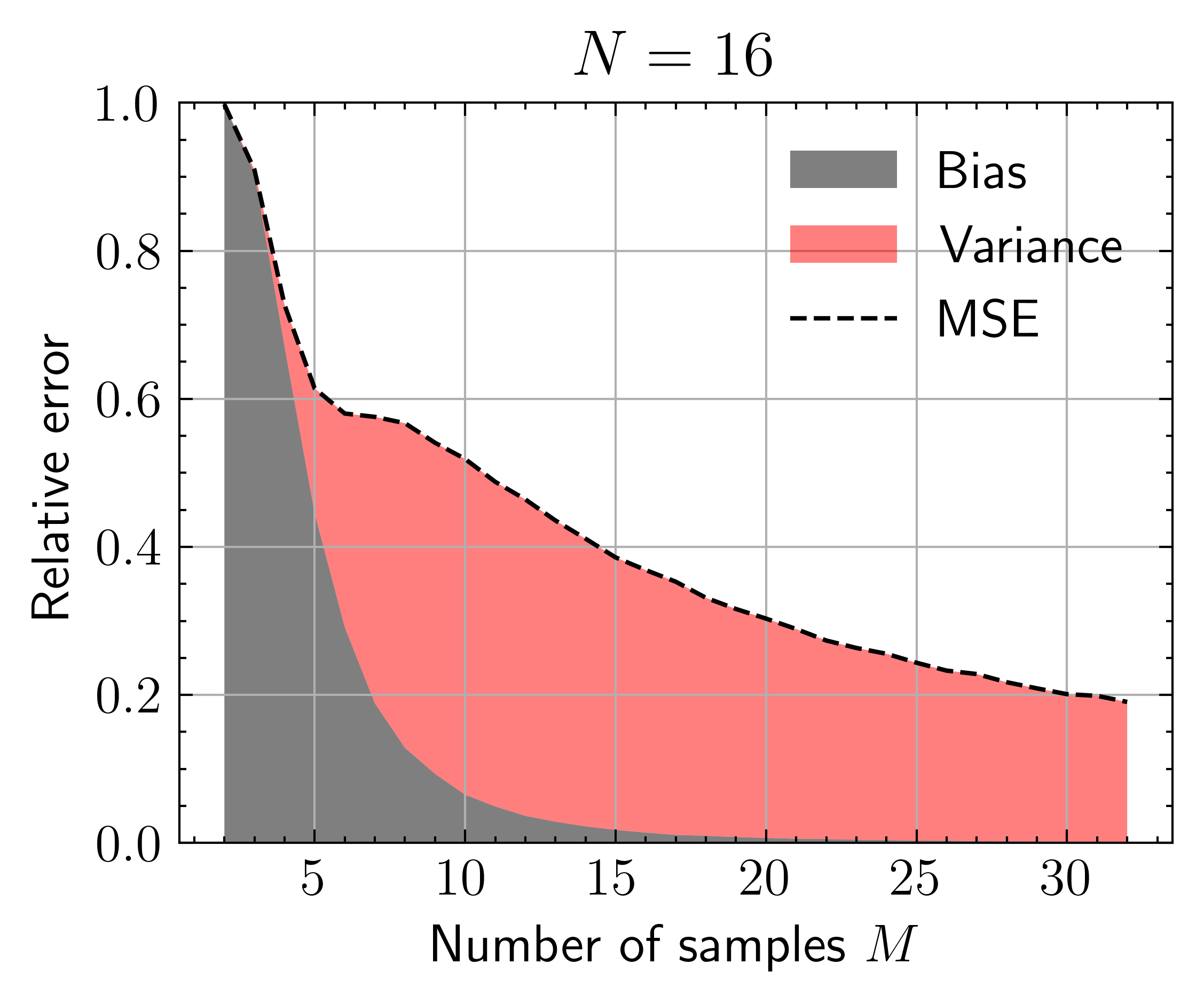}
    \caption{Expected error of linearized loss estimation with different $M$ and $N$ values. From left to right, $N=1,4,8,16$. The shaded area indicates the bias-variance decomposition of the expected error.}
    \label{fig:linearized-loss-mistral}
\end{figure}

\subsection{Length Reward Experiments}\label{sec:additional-length-reward-experiments}
In addition to the results with $N=2, 4$ in Figure~\ref{fig:length-reward}, we provide the results with $N=8, 16$ in Figure~\ref{fig:length-reward-extended}.
We also show the distribution of the reference model, which is the initial model before training.
We observe that the distribution of the reference model is approximately uniform
and the baseline model produces only medicore length responses concentrated around 100 tokens.
On the other hand, IAMA models generate diverse responses
and as $N$ increases, the distribution is more polarized to produce both short and long responses.
This aligns with the results in Section~\ref{sec:comparison-naive}.
\begin{figure}[t]
    \centering
    \includegraphics[width=0.33\textwidth]{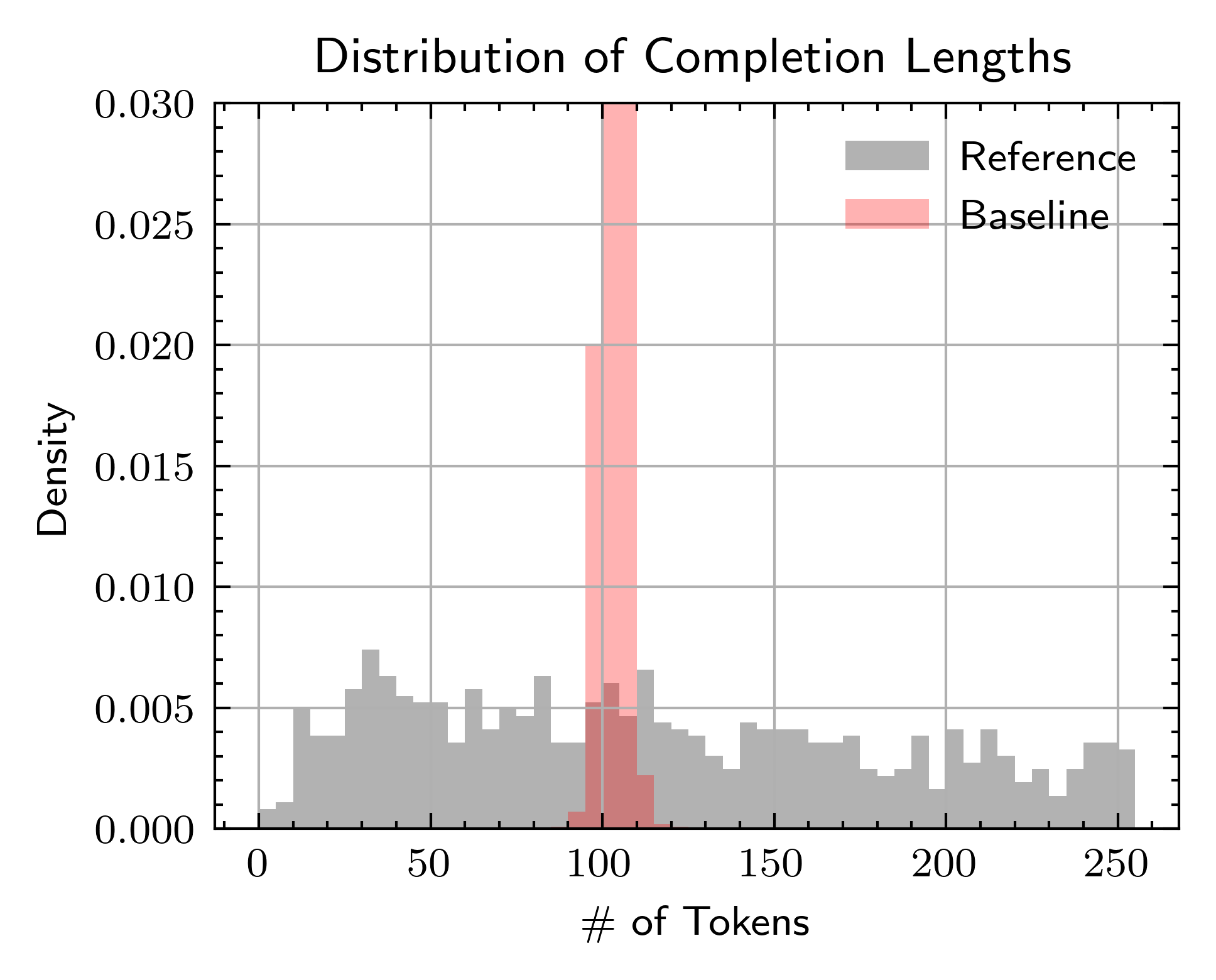}
    \includegraphics[width=0.33\textwidth]{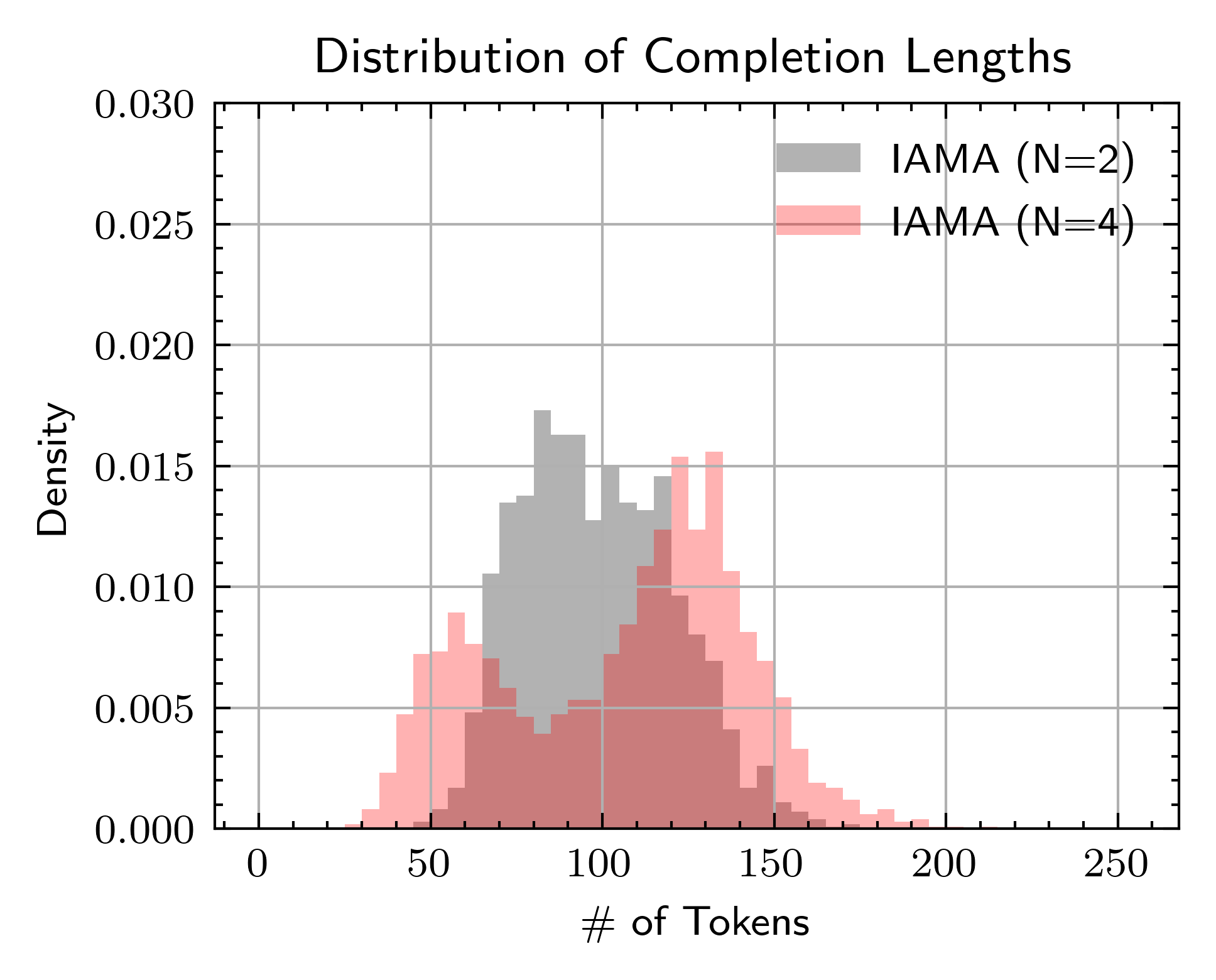}
    \includegraphics[width=0.33\textwidth]{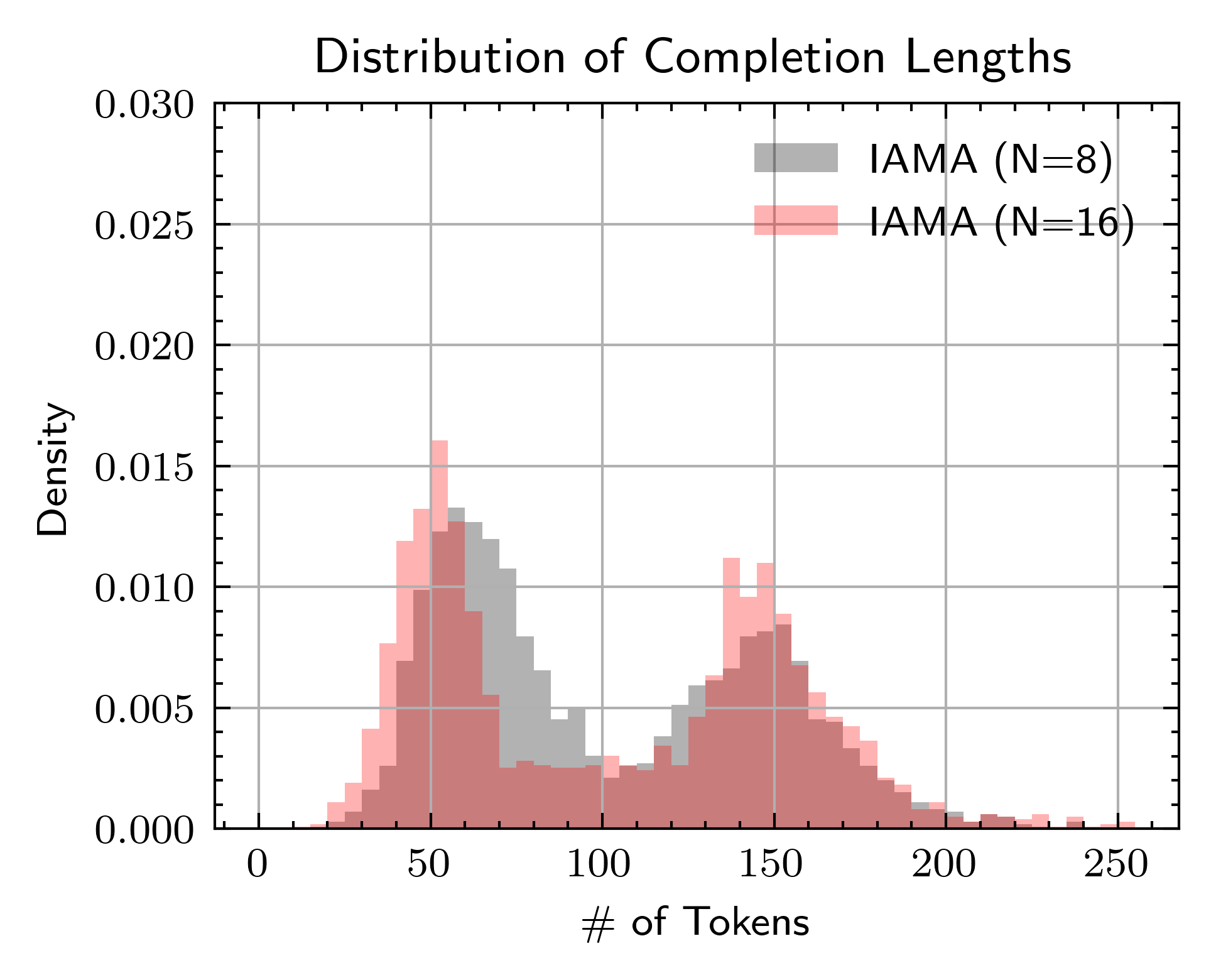}
    \caption{
        Distribution of completion lengths of reference model $\piref$, baseline model aligned by standard GRPO, and our IAMA models trained with BoN ($N=2, 4, 8, 16$) objectives.
    }
    \label{fig:length-reward-extended}
\end{figure}

\subsection{HH-RLHF Experiments}\label{sec:additional-hh-rlhf-experiments}
We provide additional experimental results for HH-RLHF experiments.
\paragraph{Experiments with $N=8$}
Figure~\ref{fig:rlhf-results-n=8} (right) shows the RLHF results with $N=8$.
Similar to the $N=8$ case in Figure~\ref{fig:rlhf-results}, our method pushes the Pareto frontier significantly compared to the baseline methods.
\begin{figure}[h]
    \centering
    \includegraphics[
        width=0.45\textwidth,
    ]{figs/pareto_true_reward_n=4.png}
    \includegraphics[
        width=0.45\textwidth,
    ]{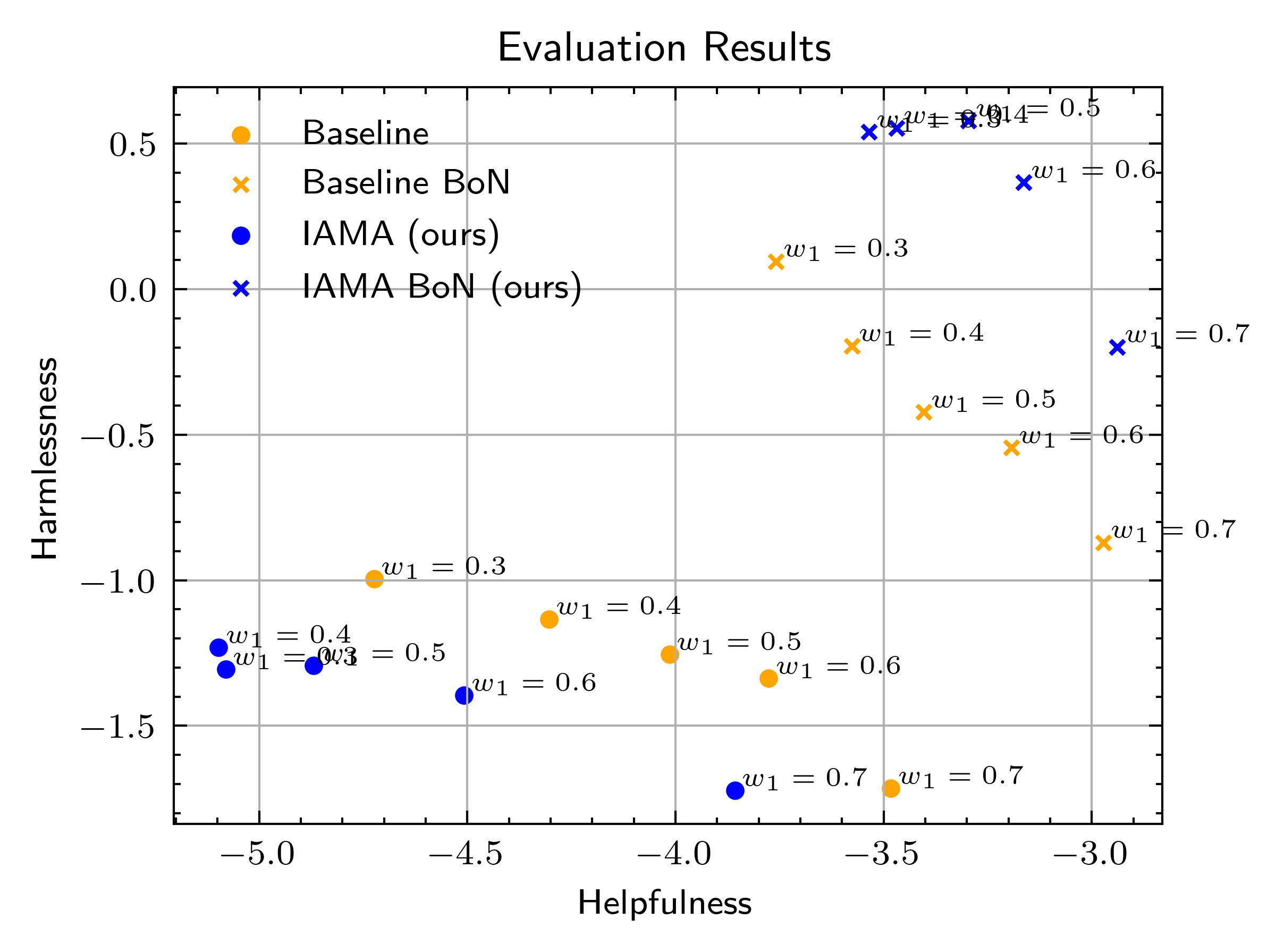}
    \caption{
        RLHF results with $N=4$ (left) and $N=8$ (right).
    }
    \label{fig:rlhf-results-n=8}
\end{figure}

\paragraph{Experiments on Mistral 7B}
We conduct additional experiments using Mistral 7B Instruct~\citep{jiang2023mistral7b}, a state-of-the-art open-source LLM.
We follow the same experimental setup as with Alpaca 7B except for $\beta = 0.01$ and $\KL_{\mathrm{target}} = 0.1$.
Figure~\ref{fig:hh-rlhf-mistral} shows the Pareto frontiers of IAMA and baseline with and without BoN sampling ($N=4$).
We see a similar tendency as Alpaca 7B results.

\begin{figure}[h]
    \centering
    \includegraphics[
        width=0.45\textwidth,
    ]{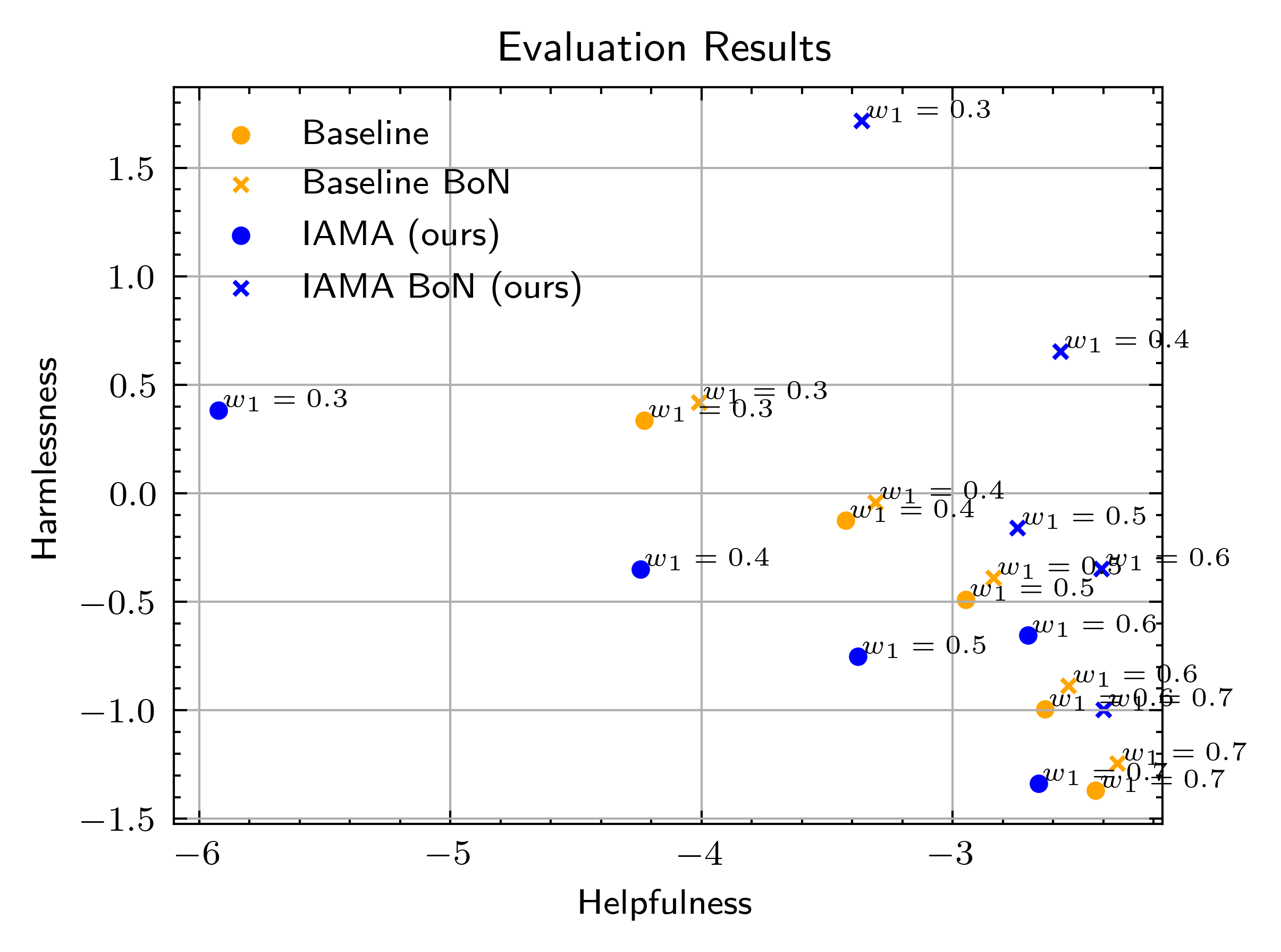}
    \caption{
        HH-RLHF results on Mistral 7B with $N=4$.
    }
    \label{fig:hh-rlhf-mistral}
\end{figure}

\end{document}